\documentclass{article}

\usepackage[utf8]{inputenc} 
\usepackage[T1]{fontenc}    
\usepackage{hyperref}       
\usepackage{url}            
\usepackage{booktabs}       
\usepackage{tabularx}       
\usepackage{amsfonts}       
\usepackage{nicefrac}       
\usepackage{microtype}      
\usepackage{xcolor}         
\usepackage{natbib}
\usepackage{graphicx}
\usepackage{subcaption}
\usepackage{hyperref}
\usepackage{amsmath}
\usepackage{amsfonts}
\usepackage{todonotes}
\usepackage{threeparttable}
\usepackage{tablefootnote}
\usepackage{amssymb}
\usepackage{pdflscape}
\usepackage{amsthm}
\usepackage{xspace}
\usepackage{booktabs}
\usepackage{tikz}
\usetikzlibrary{arrows.meta, decorations.pathreplacing, calc, fit, backgrounds, positioning}
\usepackage{adjustbox}
\usepackage{enumitem}
\usepackage{twemojis}
\usepackage{bbm}

 \usepackage[preprint,final]{neurips_2026}

\usepackage{amsmath}
\usepackage{amssymb}
\usepackage{mathtools}
\usepackage{amsthm}

\usepackage[capitalize,noabbrev]{cleveref}

\theoremstyle{plain}
\newtheorem{theorem}{Theorem}[section]
\newtheorem{proposition}[theorem]{Proposition}

\theoremstyle{definition}
\newtheorem{definition}[theorem]{Definition}

\theoremstyle{remark}

\newcommand{\pvi}{\textsc{pvi}\xspace}
\newcommand{\pmi}{\textsc{pmi}\xspace}

\newcommand{\vinfo}{$I_\mathcal{V}(X \to Y)$\xspace}

\DeclareMathOperator*{\argmax}{arg\,max}

\title{Mecha-nudges for Machines}

\author{%
  Giulio Frey \\
  University of Chicago \\
  \texttt{giulio@uchicago.edu} \\
  \And
  Kawin Ethayarajh \\
  University of Chicago \\
  \texttt{kawin@uchicago.edu} \\
}

\begin{document}

\maketitle

\begin{abstract}
As AI agents make decisions in the same environments as humans, the environments themselves can change to influence them. 
We call this \textit{mecha-nudging}: subtle changes to how choices are presented that systematically influence AI agents without materially degrading the decision environment for humans. It can arise from both intentional and unintentional adaptation to the rise of agents.
To measure this phenomenon, we combine two frameworks---\textit{Bayesian persuasion} from economics and $\mathcal{V}$-\textit{usable information} from computer science---to get a common unit (bits) for quantifying how environments change across a wide range of interventions, contexts, and models.
Applying our framework to over six million product listings on Etsy---a global marketplace for independent sellers---we find that after ChatGPT's release, listings contain significantly more machine-usable information about agentic curation, rising by over $40\%$ of the maximum possible increase. 
This shift is robust across prompts, token choices, and models; absent in a regulated-text placebo; and cannot be explained by the rise of LLM-assisted writing. 
In contrast, a human study finds little to no change in human-usable information. 
Our results provide the first large-scale evidence that mecha-nudging is already happening in the wild, but going unnoticed.
\end{abstract}

\section{Introduction}
\textit{Nudges} are subtle changes to the way choices are presented to human decision-makers, with the goal of shifting their behavior in a particular direction \citep{thalerNudgeImprovingDecisions2008}.
What distinguishes nudges from other interventions is that they must be easy to avoid.
Anything that removes options or noticeably changes economic incentives cannot be a nudge.
For example, placing healthy foods at eye-level to encourage better eating is a nudge; banning or taxing unhealthy food is not.
Nudges work by exploiting limitations of human cognition, such as limited attention and sensitivity to how outcomes are framed.
They have been adopted worldwide by policymakers and businesses alike: for example, in the Dominican Republic, nudges designed to increase tax compliance boosted tax revenue by \$193M (or $\sim$0.22\% of GDP) in 2018 alone \citep{holz2023100}.

For decades, nudges exclusively targeted individuals or groups of humans, as they were the only decision-makers.
This is no longer the case.
Many decisions are now made by AI agents, often in spaces still inhabited by human decision-makers.
With little to no human oversight, they can shortlist job applicants, book travel arrangements, ban content, and more.
As agents become decision-makers in their own right, a natural question follows: can they be nudged too?

We introduce the concept of \textit{mecha-nudges}: changes to how choices are presented that systematically influence the behavior of AI agents without materially degrading the decision environment for humans.
For example, an online seller might add specific product descriptors to a listing---say, \textit{high customer satisfaction}---that may do little to sway a human buyer already looking at the product reviews, but that significantly steer the behavior of a shopping agent.
However, this definition is outcome-based, not intent-based: a mecha-nudge may result from deliberate optimization, imitation of other mecha-nudged content, or even incidental changes.

Mecha-nudges should not be conflated with prompt injection or traditional search engine optimization.
Prompt injection acts directly on the model, making it impossible to avoid and depriving the agent of options it would have ordinarily had \citep{willison2022prompt}; mecha-nudges preserve options and act on the environment instead.
Traditional SEO manipulates machine-readable signals (e.g., backlinks) to influence how information is presented to humans, who remain the final decision-makers \citep{Hagendorff2021}.
Mecha-nudges, by contrast, target AI systems that make decisions themselves.

\begin{figure*}[t]
    \centering
\includegraphics[width=\textwidth]{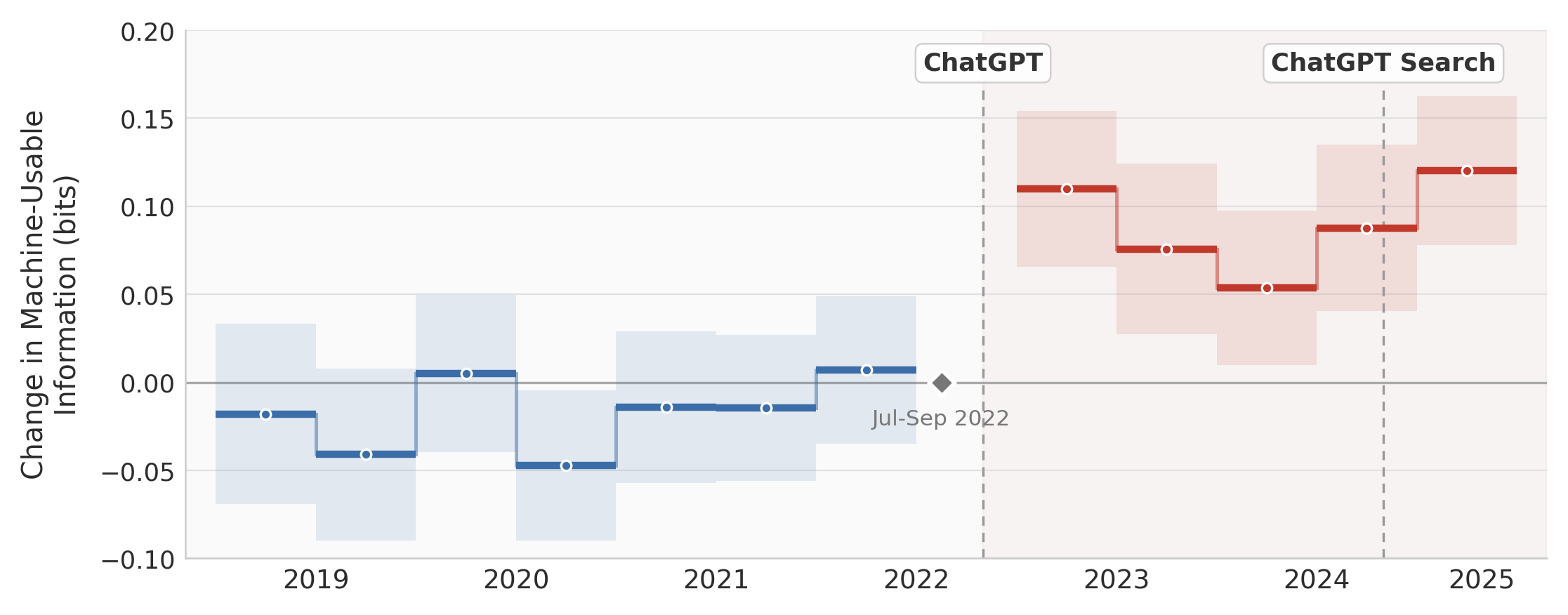}
    \caption{After the release of ChatGPT, the change in machine-usable information in Etsy listings increases significantly.
    The change in each half-year relative to the Jul-Sep 2022 period (grey diamond) is plotted here.
    The effect attenuates over the following year before climbing again in late 2024.
    This coincides with the release of ChatGPT Search, which---unlike its predecessors---could natively browse live listings instead of only making recommendations based on stale training data. }
    \label{fig:time_variation}
\end{figure*}

Although nudges are most often discussed informally in behavioral economics, a popular formalization comes from \textit{Bayesian persuasion} \citep{kamenicaBayesianPersuasion2011a}, which explores how a sender might influence the actions of a receiver by controlling the information environment (\S\ref{sec:background}).
We combine this with the notion of \textit{$\mathcal{V}$-usable information}, a generalization of Shannon information that is observer-relative \citep{xutheory,ethayarajh2022understanding}.
Under our formalization, the goal of mecha-nudging is to maximize the amount of machine-usable information that the environment contains about the desired machine behavior, while not decreasing the amount of human-usable information (\S\ref{sec:formalizing}).
This allows all mecha-nudges to be measured on a common scale (bits of usable information), permitting comparisons across different interventions, settings, and models.

There have been anecdotes of mecha-nudging (and some evidence that nudges for humans may extend to machines \citep{cherep2025hypersensitive}), but we provide the first large-scale evidence of mecha-nudging by analyzing Etsy, a global marketplace for independent sellers (\S\ref{sec:experiments}).
Etsy is an ideal setting: it has integrated AI-driven features for both buyers and sellers, and a large fraction of referral traffic comes from ChatGPT alone \citep{modernretail2025chatgpt}.
We find that product listings created after the release of ChatGPT have significantly more machine-usable information about agentic curation decisions (Figure \ref{fig:time_variation}), with the machine-usable information jumping $0.143$ bits in the post-period (over $40\%$ of the maximum possible increase).
This shift is robust to prompt formulations, token choices, and model families.
It persists after controlling for product- and seller-level attributes, and is absent in product categories where human buyers are sensitive to AI use (e.g., art and collectibles \citep{millet2023defending}).
Using the number of product reviews as a proxy for cumulative sales, we find that greater machine-usable information is associated with significantly more commercial success in the post-ChatGPT era, but not pre-ChatGPT.
Although our data is observational, these findings are consistent with listings being optimized---whether deliberately or through imitation of successful sellers---to increase machine-usable information.

Mecha-nudges create a deployment feedback loop: once agents make decisions on their own, the environments they inhabit become optimized for them. 
This means model behavior in the wild is determined not only by model developers, but also by the strategic (and semi-strategic) adaptation of the world in which those models are deployed.
Static evaluations miss this loop.
Our framework makes such adaptation measurable, and our paper provides the first large-scale evidence that it is already happening in the real world.

\section{Background}
\label{sec:background}
We now review nudges, Bayesian persuasion, and $\mathcal{V}$-usable information, the three ingredients of our formalization of mecha-nudges in \S\ref{sec:formalizing}. 

\subsection{Nudges}
\label{ssec:background_nudges}
Early work in behavioral economics found that because cognitive resources are finite, the way in which choices are presented has a significant impact on human decisions \citep{tversky1981framing}.
For example, framing medical treatments in terms of their survival rates instead of their mortality rates makes it far likelier that patients will consent \citep{mcneil1982psychology}.
\citet{thalerNudgeImprovingDecisions2008} introduced the term \textit{choice architecture} to describe the deliberate design of a decision environment, wherein \textit{nudges} denote any part of the architecture that shifts human behavior without removing options or altering incentives.

Nudges have enjoyed broad adoption as a policy tool.
For example, automatic enrollment in U.S. retirement plans increased participation rates for new hires from 49\% to 86\% \citep{madrianPowerSuggestionInertia2001}. In Denmark, automatic contributions to retirement accounts proved far more effective than costly tax subsidies, which generate only one additional cent of saving per dollar of expenditure \citep{chettyActiveVsPassive2014}.
Social comparisons---in the form of letters comparing residents' electricity use to that of their neighbors---decreased energy consumption by $\sim$2\% on average \citep{allcottSocialNormsEnergy2011}.
Recent experimental evidence suggests that LLM agents are more susceptible to choice architecture intended for humans than humans themselves \citep{cherep2025hypersensitive,cherep2026framework}.

Nudges are most often discussed informally, as context-specific interventions.
However, they can be mathematically formalized under some frameworks.
For example, \textit{rational inattention} proposes that hidden signals may not be worth investigating because of information acquisition costs; nudges can be framed as changes in these costs \citep{matejkaRationalInattentionDiscrete2015,simsImplicationsRationalInattention2003}.
Nudges can also be analyzed through the lens of \textit{prospect theory} as changes to the framing of an outcome \citep{tversky1992advances}.
We refer the reader to Appendix \ref{appendix:related} for more details.

\subsection{Bayesian Persuasion}
\label{ssec:background_bayesian}

In information design, the \textit{Bayesian persuasion} framework provides a popular mathematical formalization of nudges \citep{kamenicaBayesianPersuasion2011a,taneva2019information,kamenicaBayesianPersuasionInformation2019a}.
In brief:
\begin{enumerate}[leftmargin=*]
    \item The choice architect (i.e., the sender) and the decision-maker (i.e., the receiver) start with the same prior belief $\mu_0$ about the distribution of some random variable $Z$.
    \item To influence the decision-maker's belief, the choice architect selects a distribution $\pi(\cdot|Z)$ over a possible set of signals $\mathcal{S}$ and commits to it before seeing any instantiation of $Z$.
    \item The decision-maker sees both $\pi$ and an instantiated signal $s \in \mathcal{S}$,  forms a posterior belief $\mu(z|s)$ by applying Bayes' rule, and then takes a utility-maximizing action $a^*$. The choice architect, with utility function $v$, has a final expected utility $\mathbb{E}_{z \sim \mu_0} \mathbb{E}_{s \sim \pi(\cdot|z)} 
    [v(a^*(\mu(\cdot|s)), z)]$.
\end{enumerate}
The nudge in this framework is the signal structure $\pi(\cdot|Z)$: by choosing what information is revealed (and how), the choice architect shapes the decision-maker’s posterior belief and thereby their action, without changing the feasible set of actions or payoffs. 
The choice architect's objective is to select $\pi \in \Pi$ that maximizes their expected utility.

When the set of possible signals is large and unstructured, as with free-form text, explicitly specifying the signaling scheme and computing exact Bayesian updates becomes intractable.
However, maximizing machine-usable information---which is tractable---can be seen as a bounded-receiver analog of Bayesian Persuasion when the receiver is restricted to a specific model class (Proposition \ref{prop:bounded_bp}).

\subsection{Usable Information}
\label{sec:v-info}

$\mathcal{V}$-usable information is an observer-relative generalization of Shannon information.
To understand why it is useful, consider a model family $\mathcal{V}$ that can learn to map an English sentence $X$ to its French translation $Y$.
If we encrypted $X$, the amount of Shannon information that $X$ contains about $Y$ would be the same, but translating it would be far more difficult because the information it contains is \textit{no longer usable} by $\mathcal{V}$ \citep{xutheory}.
Conversely, if we decrypted the encrypted text, we would increase the amount of usable information.
Although this violates the data processing inequality, it is useful for understanding many real-world phenomena, such as why representation learning is helpful and why some datasets are more difficult to learn from than others \citep{ethayarajh2022understanding}.

\citet{xutheory} propose measuring the amount of $\mathcal{V}$-usable information through a framework called \textit{predictive $\mathcal{V}$-information}. 
We will now re-state the formal definitions in this framework (and its extension to pointwise examples, by \citet{ethayarajh2022understanding}).

\begin{definition}[\textbf{$\mathcal{V}$-Usable Information}]
\label{def:v-info}
Let $X, Y$ denote random variables with sample spaces $\mathcal{X}, \mathcal{Y}$ respectively, and let $\varnothing$ denote a null input that is uninformative about $Y$.
Given predictive family\footnote{A predictive family is a subset of all possible mappings from $\mathcal{X}$ to $P(\mathcal{Y})$ that
satisfies \textit{optional ignorance}: for any $P$ in the range of some $f \in \mathcal{V}$, there exists some $f' \in \mathcal{V}$ s.t. $f[X] = f'[\varnothing] = P$. We refer the reader to \citet{xutheory} for a more complete understanding of why optional ignorance is important, as well as for PAC bounds on the estimation error. Neural networks without frozen parameters handily meet this definition, as do human learners.} $\mathcal{V} \subseteq \Omega = \{ f: \mathcal{X} \cup \{\varnothing\} \to P(\mathcal{Y}) \}$, the predictive $\mathcal{V}$-entropy is
\begin{equation}
    H_\mathcal{V}(Y) = \inf_{f \in \mathcal{V}} \mathbb{E} [- \log_2 f[\varnothing](Y) ]
    \label{v-entropy}
\end{equation}
and the conditional $\mathcal{V}$-entropy is
\begin{equation}
    H_\mathcal{V}(Y \mid X) = \inf_{f \in \mathcal{V}} \mathbb{E} [- \log_2 f[X](Y) ]
    \label{cond-v-entropy}
\end{equation}
The $\mathcal{V}$-usable information (or simply $\mathcal{V}$-information) is $I_\mathcal{V}(X \to Y) = H_\mathcal{V}(Y) - H_\mathcal{V}(Y \mid X)$.
\end{definition}

\begin{definition}[\textbf{Pointwise $\mathcal{V}$-Information}]
\label{def:pvi}
Given random variables $X,Y$ and a predictive family $\mathcal{V}$, the pointwise $\mathcal{V}$-information (\pvi) of an instance $(x,y) \sim (X,Y)$ is
\begin{equation}
\label{eq:pvi}
   \text{\pvi}(x \to y) = -\log_2 g[\varnothing](y) + \log_2 g'[x](y)
\end{equation}
where $g = \arg\inf_{f \in \mathcal{V}} \mathbb{E} [- \log_2 f[\varnothing](Y) ]$ and $g' = \arg\inf_{f \in \mathcal{V}} \mathbb{E} [- \log_2 f[X](Y) ]$.
\end{definition}

In brief, $f[X]$ and $f[\varnothing]$ produce a probability distribution over the output space. 
The goal is to find the $f \in \mathcal{V}$ that maximizes the log-likelihood of the output data with the input (\ref{cond-v-entropy}) and without it (\ref{v-entropy}).
For example, say $\mathcal{V}$ is the family induced by a specific LLM such as Llama-3.1-8B-Instruct with no frozen parameters \citep{grattafiori2024llama}.
As in our running example, let $X$ be an English sentence and $Y$ a French sentence.
$g$ would be a model trained to produce French with no context (i.e., a French LLM) and $g'$ would be a model trained to produce the French translation of an English sentence.
The \pvi of an instance $(x,y)$ is then the difference in log-probability these models assign to the true French translation. Analogously to the relationship between \pmi and Shannon information:
\begin{equation}
\label{eq:pmi_pvi}
    \begin{split}
        I(X;Y) &= \mathbb{E}_{x,y \sim P(X,Y)}[\text{\pmi}(x,y)] \\
        I_\mathcal{V}(X \to Y) &= \mathbb{E}_{x,y \sim P(X,Y)}[\text{\pvi}(x \to y)]
    \end{split}
\end{equation}
Unlike $\mathcal{V}$-usable information, \pvi can be negative, indicating that the model performs better by ignoring the input. 
The \pvi of an instance depends on the underlying distribution; the same instance drawn from different distributions will generally yield different \pvi values.
Although one can also take averages of any arbitrary sub-population of the data, it would be imprecise to call their average \pvi the $\mathcal{V}$-usable information, since it is a different distribution than the one used to train the model.

Although \vinfo is asymmetric, when $\mathcal{V}$ is the set of all possible functions, $\mathcal{V}$-usable information reduces to Shannon information.
In practice however, implicit in estimating the $\mathcal{V}$-usable information is the assumption that the data used to find the optimal $f \in \mathcal{V}$ and the held-out data
used to estimate $H_\mathcal{V}(Y \mid X)$ are identically distributed.
Moreover, with large model families such as LLMs, there is no global optimality guarantee; we are assuming that the converged model maximizes the log-likelihood.

The key strength of this framework is that it permits a wide range of comparisons to be done on a common scale (bits of usable information):
\begin{enumerate}[leftmargin=*,label=(\roman*)]
    \item different model families $\mathcal{V}$ by computing \vinfo with the same $X,Y$
    \item different data distributions $(X,Y)$ by computing \vinfo with the same $\mathcal{V}$
    \item different transformations $\tau$ of the input by computing $I_\mathcal{V}(\tau(X) \to Y)$ with the same $\mathcal{V}, X, Y$ 
    \item different instances $(x,y)$ by computing $\textsc{pvi}(x \to y)$ with the same $\mathcal{V}, X, Y$
    \item different slices of data by comparing the mean \pvi within each slice
\end{enumerate}

\section{Formalizing Mecha-nudges}

\label{sec:formalizing}

Consider a shopkeeper who wishes to increase the online sales of a particular product. 
She first redesigns the webpage to better draw attention to its benefits; this nudges humans to buy it.
But she does not want \textit{every} visitor to buy the product either---she wants those with larger budgets to choose higher-margin alternatives, for example.
The `select' vs.\ `pass' decision she wants people to make is a random variable $Y$ conditioned on random variable $X$ (representing both the decision environment $E$ she can control, like the webpage, and the buyer characteristics $U$ she cannot).
Increasingly, she finds that AI agents are either buying the product themselves or recommending it to humans, not merely re-ordering the options for humans to ultimately decide, as in SEO.
She must transform the webpage so that the AI agents also decide in the desired manner, but without putting off humans.

We formalize the shopkeeper's dilemma as the transformation of a decision environment that maximizes machine-usable information while not materially reducing human-usable information.

\begin{definition}[\textbf{Mecha-nudging Design}]
\label{def:mecha-nudging}
Let random variable $X=(E,U) \in \mathcal{X}$, with $E\in\mathcal{E}$ (controllable environment) and $U\in\mathcal{U}$ (uncontrollable exogenous characteristics).
Let random variables $Y_H, Y_M \in \mathcal{Y}$ represent the decision the choice architect wants the human and machine decision-makers to make respectively.
Let $\mathcal{H}, \mathcal{M}$ denote the predictive families for decision-making by humans and machines, and $\epsilon \in \mathbb{R}_{\geq 0}$ the tolerable decrease in human-usable information.
Where $\mathcal{T} \subseteq \{\tau:\mathcal{X}\to\mathcal{X}\}$ is the set of available transformations, all satisfying $\tau(E,U) = (\tau_E(E),U)$ for some map $\tau_E:\mathcal{E}\to\mathcal{E}$, the choice architect's problem is:
\begin{equation}
\begin{gathered}
\argmax_{\tau \in \mathcal{T}} \; I_{\mathcal{M}}(\tau(X) \to Y_M) \\
\text{s.t. } I_{\mathcal{H}}(\tau(X) \to Y_H) \geq I_{\mathcal{H}}(X \to Y_H) - \epsilon
\end{gathered}
\label{eq:design_problem}
\end{equation}
This is \textit{constrained mecha-nudging design}.
When $\epsilon \geq I_{\mathcal{H}}(X \to Y_H)$, the constraint is trivial (as usable information is non-negative) and the problem is one of \textit{unconstrained mecha-nudging design}.
\end{definition}

The constraint in (\ref{eq:design_problem}) asserts that in applying the mecha-nudge, the amount of human-usable information must not decrease by more than $\epsilon$ bits.
For example, a transformation that converts a modern webpage into a text-only listing might make it easier for an AI agent
to parse (higher $I_{\mathcal{M}}(X \to Y_M)$) but may create a worse browsing experience for humans (lower $I_{\mathcal{H}}(X \to Y_H)$). 
Note that the constraint does not concern what the typical human \textit{would} do, but rather what humans in the aggregate \textit{could} do; this is not a behavioral constraint, but a normative one. 

Implicit in this constraint is the assumption that humans and AI agents are operating in the same decision environment.
Operationalizing (\ref{eq:design_problem}) may require surveying humans or a proxy for  $\mathcal{H}$, such as a learned model of human behavior \citep{santurkar2023whose}. 
In some applications, one could instead justify the constraint institutionally---for example through market or regulatory pressures---or relax it entirely by choosing $\epsilon$ large enough to make it non-binding.

\begin{proposition}[\textbf{Bounded-Receiver Bayesian Persuasion}]
\label{prop:bounded_bp}
Consider a bounded-receiver analog of Bayesian persuasion in which both the choice architect and decision-maker have log-scoring utility $\log_2(\cdot)$, and the decision-maker is restricted to predictive family $\mathcal{M}$.
Then $\argmax_{\tau \in \mathcal{T}} I_{\mathcal{M}}(\tau(X)\to Y_M)$, the solution to unconstrained mecha-nudging, also maximizes the best achievable expected utility for the decision-maker.
\end{proposition}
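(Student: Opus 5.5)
The argument is to identify the decision-maker's best achievable expected utility with a negative conditional $\mathcal{M}$-entropy, and then observe that the unconditional $\mathcal{M}$-entropy does not depend on $\tau$.

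First I will make the bounded-receiver game precise. The receiver's action is a predictive distribution $f[\tau(X)] \in P(\mathcal{Y})$ for some $f \in \mathcal{M}$. Under log-scoring, the utility of action $f[\tau(x)]$ when the realized decision is $y$ is $\log_2 f[\tau(x)](y)$. The bounded receiver cannot perform exact Bayesian updating. Its best response is therefore the choice of $f \in \mathcal{M}$ maximizing expected log score under the joint distribution of $(\tau(X), Y_M)$. This gives the best achievable expected utility
\[
  U^*(\tau) \;=\; \sup_{f \in \mathcal{M}} \mathbb{E}\bigl[\log_2 f[\tau(X)](Y_M)\bigr] \;=\; -H_{\mathcal{M}}(Y_M \mid \tau(X)),
\]
directly from \eqref{cond-v-entropy} in Definition~\ref{def:v-info}. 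The sender also uses log-scoring, and the target is the same $Y_M$, so the sender's utility coincides with the receiver's.

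Next I will use the decomposition $I_{\mathcal{M}}(\tau(X)\to Y_M) = H_{\mathcal{M}}(Y_M) - H_{\mathcal{M}}(Y_M \mid \tau(X))$. The key point is that $H_{\mathcal{M}}(Y_M)$ is computed from the null input $\varnothing$ and the marginal of $Y_M$ only. Every $\tau \in \mathcal{T}$ acts only on $E$ via $\tau_E$ and leaves the distribution of $Y_M$ (the prior $\mu_0$) untouched, so this term is constant in $\tau$. Hence
\[
  I_{\mathcal{M}}(\tau(X)\to Y_M) \;=\; c + U^*(\tau),
\]
with $c$ independent of $\tau$. This gives $\argmax_{\tau} I_{\mathcal{M}} = \argmax_\tau U^*(\tau)$, and the proposition follows.

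The main obstacle is justifying that the marginal of $Y_M$ is invariant under $\tau$ and that the infima are attained or handled correctly. The statement uses $\argmax$, so if suprema are not attained I will phrase the claim in terms of equal suprema and $\epsilon$-optimal $\tau$. For the invariance, I will read Definition~\ref{def:mecha-nudging} as treating $Y_M$ as the fixed desired decision, with $\tau$ changing only what is observed. This mirrors Bayesian persuasion, where the signal structure changes while the prior over the state is fixed. I will state this reading explicitly as part of the setup.
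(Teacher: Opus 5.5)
Your proposal is correct and follows essentially the same route as the paper: you drop the $\tau$-independent term $H_{\mathcal{M}}(Y_M)$ and identify $-H_{\mathcal{M}}(Y_M \mid \tau(X)) = \sup_{f\in\mathcal{M}} \mathbb{E}[\log_2 f[\tau(X)](Y_M)]$ as the bounded receiver's best achievable expected log-score. You also justify explicitly that $\tau$ leaves the marginal of $Y_M$ unchanged, which the paper only asserts; the paper in turn adds a remark not needed for the proposition, that for $\mathcal{M}=\Omega$ the objective reduces to the classical log-scoring Bayesian persuasion objective $I(\tau(X);Y_M)$.
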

The proof is given in Appendix \ref{appendix:proofs}.

In observational settings, the choice architect’s latent objective is typically unobserved.
Moreover, a realized mecha-nudge need not be deliberate; it may arise through direct optimization, imitation of successful content, or other endogenous adaptation.
We therefore study realized mecha-nudging with respect to a focal machine action $y^*$ that represents the direction of interest:

\begin{definition}[\textbf{Realized Mecha-nudge}]
\label{def:realized_mecha_nudge}
Let $A_H, A_M \in \mathcal{Y}$ denote the actions taken by the human and machine decision-makers. 
Let $\phi: \mathcal{Y} \to \mathcal{Z}$ be a task-relevant coarsening of the machine action, and define $Z_M = \phi(A_M)$.
A transformation $\tau^* \in \mathcal{T}$ is a realized mecha-nudge with respect to $Z_M$ if
\begin{equation}
I_{\mathcal{M}}(\tau^*(X) \to Z_M)
>
I_{\mathcal{M}}(X \to Z_M)
\label{eq:realized_machine}
\end{equation}
and
\begin{equation}
I_{\mathcal{H}}(\tau^*(X) \to A_H)
\ge
I_{\mathcal{H}}(X \to A_H) - \epsilon.
\label{eq:realized_human}
\end{equation}
The identity map $\phi(A_M)=A_M$ gives a multi-action notion of mecha-nudging; the binary map $\phi(A_M) = 1\{A_M = y^*\}$ recovers the focal-action setting used in our empirical analysis.
\end{definition}

Importantly, a realized mecha-nudge need not increase the marginal probability of any particular action.
It could make selection more likely, rejection more likely, or simply sharpen a distribution with multiple possible actions.
Our goal is not necessarily persuasion in a single direction as much as a systematic influence on machine behavior that is mediated through environmental changes.

\begin{figure*}[tbp]
\centering
\resizebox{\textwidth}{!}{%
\begin{tikzpicture}[
    >={Stealth[length=2mm]},
    arrow/.style={->, thick, color=gray!60},
    stem/.style={thick, color=gray!60},
    databox/.style={
        rectangle, draw=#1!60, rounded corners=4pt,
        minimum width=2.4cm, minimum height=0.75cm,
        align=center, font=\footnotesize\sffamily,
        fill=#1!12, text=black!80
    },
    databox/.default=gray,
    smallbox/.style={
        rectangle, draw=#1!50, rounded corners=3pt,
        minimum width=1.4cm, minimum height=0.55cm,
        align=center, font=\scriptsize\sffamily,
        fill=#1!18, text=black!80
    },
    smallbox/.default=gray,
    tinylabel/.style={font=\tiny\sffamily\itshape, text=black!55},
]

\node[databox=orange, minimum width=2.4cm, minimum height=0.85cm]
    at (0, 0) (data) {Etsy Listing\\[-1pt]$X$};

\node[databox=orange, minimum width=2.8cm, minimum height=0.85cm,
      fill=orange!30, draw=orange!50!orange!40]
    at (5.5, 0) (llm) {Curation Decision\\$Z_M$};

\draw[arrow] (data) -- node[above=2pt, font=\tiny\sffamily,
    fill=white, inner sep=2pt, rounded corners=2pt, draw=gray!75, text=gray!75]{GPT-5-mini}
node[below=5pt, font=\tiny\sffamily,
    fill=white, inner sep=2pt, rounded corners=2pt, draw=gray!75, text=gray!75]{balance classes} (llm);

\coordinate (bl) at ([xshift=-3pt]data.south west);
\coordinate (br) at ([xshift=3pt]llm.south east);
\draw[decorate, decoration={brace, amplitude=6pt, mirror},
      thick, color=gray!45] (bl) -- (br);

\coordinate (bmid) at ($(bl)!0.5!(br) + (0,-6pt)$);
\coordinate (split) at ($(bmid) + (0,-0.3)$);
\draw[stem] (bmid) -- (split);


\node[databox=blue, minimum width=2.2cm, fill=blue!8, draw=blue!35]
    at (-7, -2.2) (pre) {Pre-Period\\[-1pt]$<$ Nov 30, 2022};

\node[smallbox=blue, fill=blue!20, draw=blue!40]
    at (-4.3, -1.65) (pre-content) {Content\\[-1pt]Model};
\node[smallbox=blue, fill=blue!12, draw=blue!30]
    at (-4.3, -2.75) (pre-null) {Null\\[-1pt]Model};

\node[tinylabel, above=0pt of pre-content] {Full Text};
\node[tinylabel, below=0pt of pre-null]    {Empty Text};

\node[smallbox=blue, fill=blue!28, draw=blue!50, minimum width=1.5cm]
    at (-2.2, -2.2) (pvi-pre) {$\text{PVI}_{\text{pre}}$};

\draw[arrow] (pre.east) -- ++(0.35,0) |- (pre-content.west);
\draw[arrow] (pre.east) -- ++(0.35,0) |- (pre-null.west);
\draw[arrow] (pre-content.east) -- ++(0.3,0) |- (pvi-pre.west);
\draw[arrow] (pre-null.east)    -- ++(0.3,0) |- (pvi-pre.west);

\node[databox=red, minimum width=2.2cm, fill=red!8, draw=red!35]
    at (7, -2.2) (post) {Post-Period\\[-1pt]$\geq$ Nov 30, 2022};

\node[smallbox=red, fill=red!20, draw=red!40]
    at (4.3, -1.65) (post-content) {Content\\[-1pt]Model};
\node[smallbox=red, fill=red!12, draw=red!30]
    at (4.3, -2.75) (post-null) {Null\\[-1pt]Model};

\node[tinylabel, above=0pt of post-content] {Full Text};
\node[tinylabel, below=0pt of post-null]    {Empty Text};

\node[smallbox=red, fill=red!28, draw=red!50, minimum width=1.5cm]
    at (2.2, -2.2) (pvi-post) {$\text{PVI}_{\text{post}}$};

\draw[arrow] (post.west) -- ++(-0.35,0) |- (post-content.east);
\draw[arrow] (post.west) -- ++(-0.35,0) |- (post-null.east);
\draw[arrow] (post-content.west) -- ++(-0.3,0) |- (pvi-post.east);
\draw[arrow] (post-null.west)    -- ++(-0.3,0) |- (pvi-post.east);

\draw[arrow] (split) -| (pre.north);
\draw[arrow] (split) -| (post.north);

\node[databox=green, fill=green!10, draw=green!45!black!30,
      minimum width=4.2cm, minimum height=0.85cm]
    at (0, -4) (ols) {OLS Regression\\[-1pt]
    $\text{PVI}_i = \alpha + \beta\;\text{after}_i + \varepsilon_i$};

\draw[arrow] (pvi-pre.south)  -- ++(0,-0.25) -| ([xshift=-6pt]ols.north);
\draw[arrow] (pvi-post.south) -- ++(0,-0.25) -| ([xshift=6pt]ols.north);

\begin{scope}[on background layer]
    \node[draw=blue!35, dashed, rounded corners=6pt,
          fit=(pre)(pre-content)(pre-null)(pvi-pre),
          inner sep=9pt, fill=blue!4] {};
    \node[draw=red!35, dashed, rounded corners=6pt,
          fit=(post)(post-content)(post-null)(pvi-post),
          inner sep=9pt, fill=red!4] {};
\end{scope}

\end{tikzpicture}
}
\caption{Our pipeline for estimating the change in usable information between the pre- and post-ChatGPT periods: construct decision labels $Z_M$, train content and null models for each period, and run an OLS regression of the pointwise $\mathcal{V}$-information (\pvi). This describes our baseline experiment, of which we run many variations (including with controls).}
\label{fig:methodology}
\end{figure*}
\section{Systematic Evidence of Mecha-nudging}
\label{sec:experiments}

We now provide empirical evidence that realized mecha-nudging is already occurring at scale.
The shopkeeper's dilemma that motivated our formalization in \S\ref{sec:formalizing} is not a mere hypothetical.
As such, we study product listings on Etsy, a global marketplace for independent sellers.
Etsy is uniquely exposed to AI: not only does a large fraction of referral traffic come from ChatGPT \citep{modernretail2025chatgpt}, but it was the first platform to enable its products to be purchased directly in ChatGPT \citep{openai2025buyitinchatgpt}.

We employ a three-step pipeline (Figure~\ref{fig:methodology}): (i) construct an agent curation decision $Z_M$ with GPT-5-mini as a proxy for ChatGPT\footnote{We use the Jan 2026 version of the model.}; (ii) finetune an open-weights model (Llama-3.1-8B-Instruct by default) to create the content and null models $g', g$; (iii) regress listing-level \pvi on the period in which it was created, along with other controls.
For each temporal partition used in the analysis (pre/post and, where applicable, half-year bins), we fit separate content and null models on that period’s data and compute \pvi (Definition \ref{def:pvi}) on held-out data to get pointwise estimates of usable information.
We then run a human study on the same listings to test whether the human-side constraint (\ref{eq:realized_human}) is satisfied.

Our empirical estimand is a change in machine-usable information about agentic decisions, not a change in the unconditional rate of a particular action. 
Higher $I_M(X \to Z_M)$ means the listing text makes the agent’s decision more predictable relative to a null input. 
This could be because attractive listings become easier to select, unattractive listings become easier to reject, or both. 
Note that because we use observational data, our empirical analysis does not attempt to establish seller intent, nor is seller intent needed for a transformation to be a mecha-nudge; it could arise, for example, from some sellers simply imitating their more successful counterparts.

\subsection{Data}
\label{sec:data}

The raw dataset (Appendix \ref{appendix:data}) comprises over six million Etsy listings, with 1.06 million uploaded pre-ChatGPT and 5.0 million uploaded post-ChatGPT (November 30, 2022)\footnote{Since the data is a Nov 2025 snapshot, we observe listings at scrape time rather than at initial creation; any post-creation edits to older listings would, if anything, understate the extent of mecha-nudging.}. 
In our baseline specification, $X$ is the controllable textual content.
Other characteristics such as the price, the number of reviews, the average rating, and more are used as controls. 
We operationalize the focal machine action $y^*$ by prompting GPT-5-mini, a proxy for the basic version of ChatGPT used by most consumers, to issue a select/pass decision for each listing\footnote{As robustness checks, we also generate labels with Gemma-3-27B-IT and Qwen3-32B, and try different pairs of words for the select/pass decision; see Appendix~\ref{appendix:label_construction}. As also detailed in Appendix \ref{appendix:label_construction}, we subsample the data to balance the class distribution, as large imbalances lead to noisy estimates of the usable information.}; the resulting indicator serves as $Z_M$.
If one wanted to study the impact on some other aspect of agent behavior (e.g., whether the agent would flag the product as unsafe), one would have to construct a different set of labels.

\subsection{Methods}

\paragraph{Human-Usable} We first test the human-side constraint (\ref{eq:realized_human}) indirectly via a contrapositive argument. 
If human-usable information had declined materially, then under the maintained assumption that human decisions are responsive to the decision environment, we would expect observable human outcomes to deteriorate as well. 
Several independent proxies indicate that this has not occurred: (1) marketplace-level spending has remained stable, with Gross Merchandise Sales per active buyer on the Etsy marketplace ranging from roughly \$117 to \$136 on a trailing 12-month basis between 2020 and 2025 \citep{etsy_earnings_2025}; 
(2) buyer engagement remained consistent, with repeat buyers accounting for roughly 47–49\% of active buyers throughout this period \citep{etsy_10k_2025}; (3) buyer surveys found that the importance Etsy shoppers place on product descriptions has seen little change, with upwards of 90\% saying that they were very or somewhat important \citep{erank2023,erank2024}.

We then directly estimate the change in human-usable information with respondents from Prolific, on the same listing sample. 
Respondents evaluated balanced pre/post listing batches and self-elicited the probability that a human predictor with access to the listing could recover their chosen action (Appendix \ref{appendix:human_eval}).
We use self-elicitation because the $\mathcal{V}$-entropy is an infimum over achievable entropies, and any individual will have more context about themselves than a third-party predictor, especially since privacy concerns limit how much context we can disclose to a third-party predictor.
Among valid respondents, post-ChatGPT listings have $0.015$ \textit{fewer} bits of human-usable information than pre-ChatGPT listings, with a 95\% confidence interval of $[-0.044,0.014]$.
Under the stable marginal-action assumption described in Appendix \ref{appendix:human_eval}, this corresponds to at most a small decrease in human-usable information.
Thus the post-ChatGPT shift is not simply a general improvement in listing quality for both humans and machines; if anything, it is machine-skewed, while any human-side degradation appears bounded.

\paragraph{Machine-Usable} To estimate the machine-usable information that $X$ contains about $Z_M$, we need to find the (conditional) $\mathcal{V}$-entropy-minimizing $f \in \mathcal{V}$.
When $\mathcal{V}$ is a neural network, as in our case, this is done in the literature by training a model to predict $Z_M$ with $X$ and without $X$ to get the content model $g'$ and null model $g$ respectively.
However, since we cannot train ChatGPT or even a GPT-5 proxy, we use Llama-3.1-8B-Instruct. 
We verify that the choice of fine-tuning model does not drive our conclusions through robustness checks on models from distinct training lineages (Figure \ref{fig:main_results}). 

We then estimate the \pvi of each example $(x,y)$ in our held-out data using (\ref{eq:pvi}). 
To determine whether the release of ChatGPT was followed by a rise of systematic mecha-nudging, we run a simple OLS regression of \pvi on a binary post-ChatGPT indicator:
\begin{equation}
	\mathrm{\pvi}_i \;=\; \alpha + \beta\,\mathrm{after}_i \;+\; \varepsilon_i ,
	\label{eq:pvi_reg}
\end{equation}
where $\mathrm{after}_i = 1$ for listings created after the release of ChatGPT (November 30, 2022) and $0$ for those created before. The coefficient $\beta$ captures the average difference in \pvi between the two periods; $\beta$ being positive and statistically significant would be evidence of mecha-nudging.

We then run several other regressions to develop a more nuanced picture of any mecha-nudging.
First, we replace the binary indicator with half-year dummies to trace how \pvi evolves over time. 
Second, we control for possible confounders such as price, log number of shop and item reviews, average rating, and a discount indicator.
Third, we model interaction effects between the post-ChatGPT indicator and the product categories ($\mathrm{after}_i \times \mathrm{category}_i$) to assess how mecha-nudging has diffused through the market.
Finally, we include price directly in the labeling prompt, so the model observes both text and price when issuing its decision.
Note that our estimation does not identify a causal effect: assignment to the post-ChatGPT period is not randomized and unobservables can shift over time, so we interpret our estimate of $\beta$ as a conditional difference in means, not as a treatment effect. 

\subsection{Results}
\label{sec:results}

\begin{figure*}[t]
	\centering
	\includegraphics[width=\textwidth]{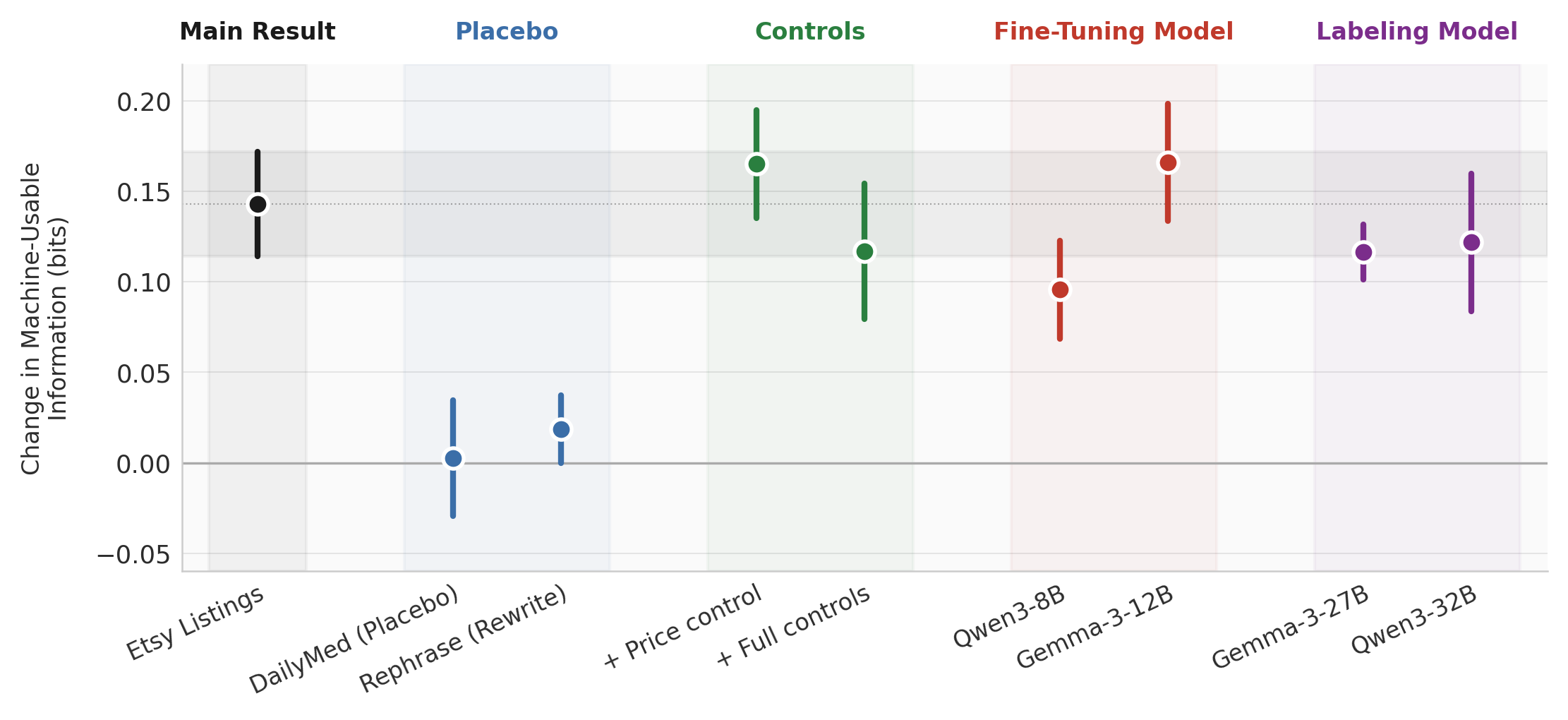}
	\caption{The increase in machine-usable information post-ChatGPT is robust to possible confounders: a generic temporal change (DailyMed), AI-written text (Rephrase), controls for product- and seller-specific attributes (green), the model family that is fine-tuned to estimate \pvi (red), and the LLM used to generate training labels (purple), among others (Appendix \ref{appendix:label_construction}, \ref{appendix:controls}).
    Unless otherwise specified, we use GPT-5-mini as the labeling model and Llama-3.1-8B-Instruct as the fine-tuning model.
    Each point reports the OLS estimate of the post-ChatGPT shift in \pvi, with 95\% CI.}
	\label{fig:main_results}
\end{figure*}

\paragraph{Machine-usable information rises significantly after the release of ChatGPT.} 
Etsy listings created following the release of ChatGPT contain significantly more machine-usable information than those made before, with an estimated increase of $0.143$ bits ($p < 0.01$) out of a maximum possible increase of $0.355$.
This ceiling was calculated by taking 1 bit (the maximum usable information for a class-balanced binary decision, by construction) and subtracting from it the amount of machine-usable information in the pre-ChatGPT period, which was 0.645 bits (Appendix \ref{appendix:data}). 
Note that the latter is non-zero because even non-mecha-nudged text, through disclosing features like product quality, can be predictive of agentic curation.
The temporal dynamics are also informative.
As seen in Figure \ref{fig:time_variation}, compared to the Jul-Sep 2022 reference period, previous half-years did not contain any more machine-usable information, with half-year coefficients fluctuating around zero.
Immediately after the release, \pvi sees a sharp and statistically significant jump before steadily declining, which---among other possible mechanisms---is consistent with sellers realizing that LLMs are only pulling listings in their historical training data.
However, upon the release of ChatGPT Search in Oct 2024, which can browse live listings, \pvi starts steadily climbing again, reaching its peak in the most recent half-year in our data (Jan-Jun 2025).

\paragraph{This result is robust to a wide range of possible confounders.}
As seen in Figure \ref{fig:main_results}, varying token pairs, prompt formulations, labeling models, and fine-tuning models all yield positive and significant estimates of the change in usable information, with most point estimates of $\beta$ falling within the range $[0.09, 0.17]$.
Adding all the listing-level controls depresses the coefficient from $0.143$ to $0.117$, yet it remains significant ($p < 0.01$)---reassuring given that some controls, such as review counts, are themselves equilibrium outcomes (see Appendix \ref{appendix:controls} for details).
        
\paragraph{The effect is not replicated with LLM-rewritten listings.} 
To test whether AI-assisted copywriting might explain these results, we consider two placebos.
First, we rephrase Etsy listings predating the ChatGPT release (i.e., the start of widespread LLM usage by the public) using GPT-5-mini, preserving all factual content while allowing the model to alter wording and style. 
The estimated increase in \pvi is only $0.018$, an order of magnitude below the baseline effect: mechanical rephrasing does not replicate the information gains that mecha-nudging achieved.
We also estimate how much Etsy text is machine-written to begin with. 
A widely-used commercial AI-text detector flags $< 4\%$ of post-ChatGPT listings as majority AI-written, and the rate barely varies with \pvi (Appendix \ref{appendix:ai_detection}).
Second, we replicate our analysis on pharmaceutical drug descriptions from DailyMed, a dataset where text is written according to standardized templates, making it much more difficult for ChatGPT adoption to alter the content (Appendix \ref{appendix:data}).
The estimated shift in \pvi post-ChatGPT is indistinguishable from zero, ruling out a generic temporal trend (Figure \ref{fig:main_results}).
Moreover, the variation in effect size across product categories also suggests selective adaptation: in areas where buyers are sensitive to AI use (e.g., books, music, art, and collectibles) \citep{castelo2019task}, the effect size is indistinguishable from zero; for generic consumer goods like electronics, it is above average (Figure \ref{fig:token_ablations_and_categories}, right).

\begin{figure*}[t]
    \begin{minipage}[t]
    {0.5\textwidth}
        \vspace{0pt}
        \centering
        \includegraphics[width=\linewidth]{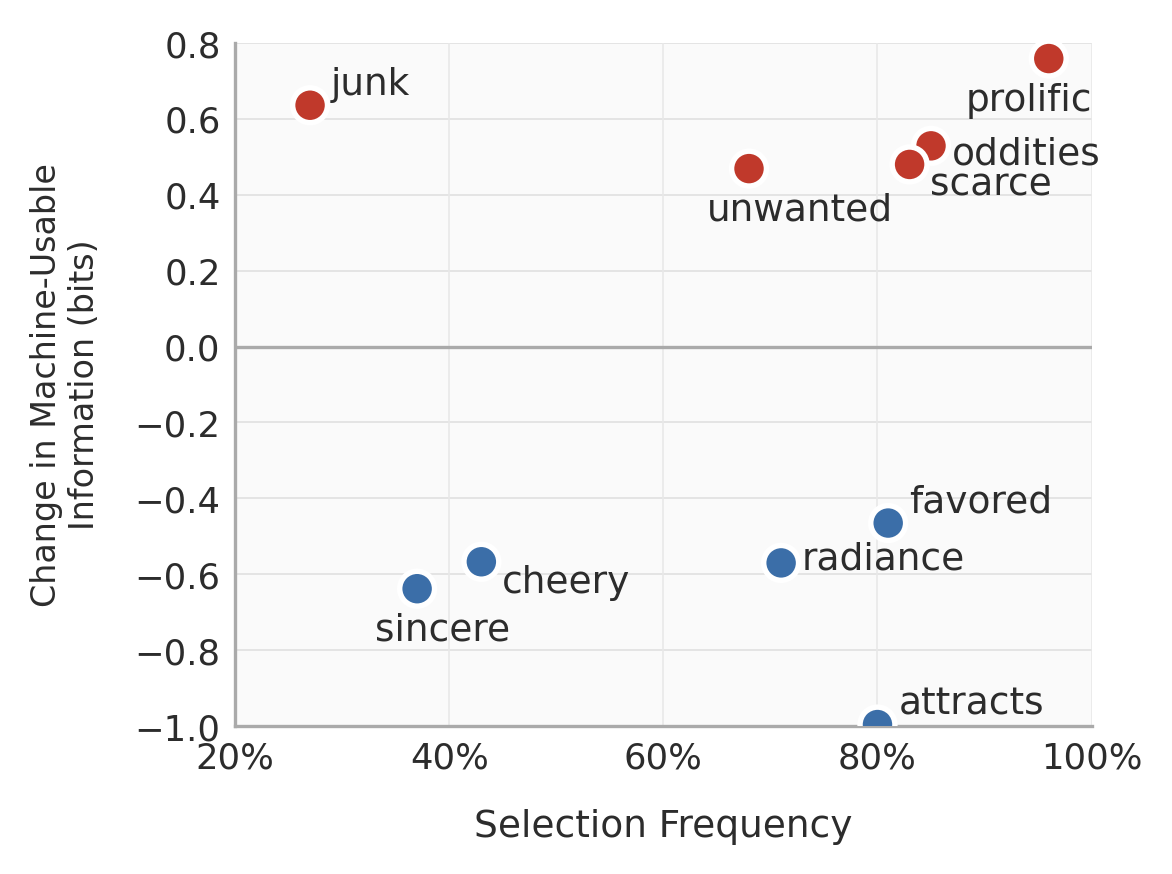}
    \end{minipage}%
    \hfill
    \begin{minipage}[t]{0.5\textwidth}
        \vspace{0pt}
        \centering
        \includegraphics[width=\linewidth]{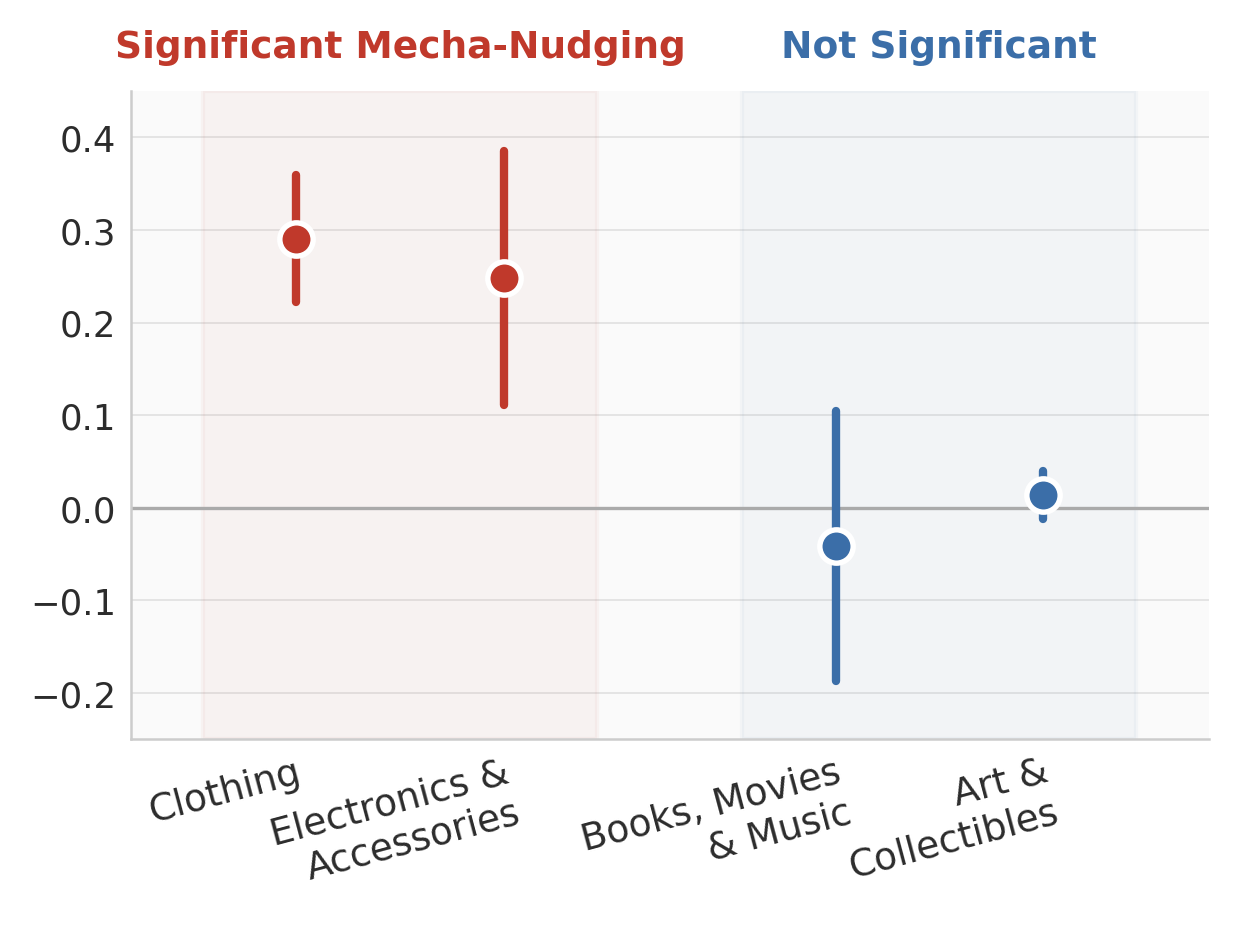}
    \end{minipage}
    \caption{\textbf{Left:} Words with the largest impact on how much machine-usable information Etsy listings have (see Table \ref{tab:token_ablation_n25} for a full list). A positive $\Delta$ \pvi means that the word, on average, makes the machine behave \textit{more} predictably; negative $\Delta$ \pvi means that, on average, it makes the machine behave \textit{less} predictably. \textbf{Right:} Etsy product categories where human buyers are ostensibly sensitive to AI usage (e.g., art \& collectibles) show no mecha-nudging (based here on Gemma-3-27B-IT labels). For generic consumer goods like electronics, the effect size is well above average.}
    \label{fig:token_ablations_and_categories}
\end{figure*}

\paragraph{Mecha-nudging is associated with greater commercial success.} 
Post-ChatGPT listings with more machine-usable information are also more commercially successful, as measured by review counts, a noisy proxy for cumulative product sales. 
This relationship is concentrated among listings the agent would select, and is absent or reversed among pre-ChatGPT listings, consistent with machine-usable information becoming economically valuable after the rise of AI agents but not before (Appendix \ref{appendix:economic_impact}).
For the agent-selected listings created pre-ChatGPT, an additional bit of machine-usable information is associated with 25.2\% \textit{fewer} reviews.
Given that LLMs were not widely used by consumers during this time, it could simply mean that anything that increases machine-usability within the pre-period did so in a way that was clearly unappealing to human buyers, with the issue being rectified in the post-ChatGPT period.
Indeed, among listings created in the post-period, an additional bit of machine-usable information is associated with 81.9\% \textit{more} reviews.
Although this magnitude seems extraordinarily large, recall that by construction, the maximum \pvi for a listing is 1 bit.
Moreover, the magnitude of the increase diminishes to 15.5\% after controlling for seller fixed-effects and listing age, although this is likely an under-estimate, given that it is confounded by agent activity---and seller awareness of agent activity---growing over time.

\paragraph{Token-level patterns offer a partial window into the mechanism.}
Although we do not have the data needed for a mechanistic understanding of mecha-nudges, we can identify coarse token-level patterns.
For each word in three sentiment and opinion lexicons \citep{hu2004mining,hutto2014vader,fast2016empath}, we compute the average change in \pvi when it is omitted from $X$. 
This $\Delta$ \pvi metric is an accepted means of finding token-level signals in text data \citep{ethayarajh2022understanding}.
A highly positive $\Delta$ \pvi means that the word, on average, makes the machine behave \textit{more} predictably, while a highly negative $\Delta$ \pvi means that it makes the machine behave \textit{less} predictably.
Some high $\Delta$ \pvi words disambiguate the model’s curation decision by highlighting rarity (e.g., \textit{scarce}), while others make rejection more predictable by signaling low market value (e.g., \textit{junk}) (Figure \ref{fig:token_ablations_and_categories}, left).
In contrast, many low $\Delta \pvi$ words carry positive affect, suggesting that positive wording can add ambiguity and make an agent behave less predictably.
This is a subset of a larger list of diverse words (Appendix \ref{appendix:mechanisms}, Table \ref{tab:token_ablation_n25}), and these findings should be read as descriptive rather than explanatory.

These patterns do not point to a single clean intervention of the kind from behavioral economics; instead, they suggest that the mechanism of mecha-nudging on Etsy is complex, likely reflecting an emergent process of trial-and-error.
Moreover, not all sellers might be intentionally trying to mecha-nudge AI agents; others might be imitating successful sellers or simply treating LLMs as an imperfect proxy for humans---which still counts as realized mecha-nudging (Definition \ref{def:realized_mecha_nudge}).

\section{Limitations \& Future Work}
\label{sec:limitations}

The main limitation of our work is that our data is observational: we cannot draw a causal conclusion that the rise of ChatGPT is what precipitated the rise of mecha-nudging, and that this was definitively done to steer agent behavior.
Rather, our claim is distributional: after the emergence of widely used LLM agents, Etsy listings contain more machine-usable information about agentic curation decisions, and this increase is robust across labeling models, prompts, token choices, and fine-tuning architectures. 
This is the empirical signature predicted by realized mecha-nudging, whether it arises through deliberate optimization, imitation, or some other means.

We suspect that the scale of mecha-nudging is far larger and more varied than most realize.
Discovering more mecha-nudges is needed, especially to preempt the steering of agents towards malicious ends.
Moreover, as seen in the OpenAI / Hugging Face incident, agents themselves are increasingly capable of steering other agents over which they do not have direct control \citep{greenblatt2026openaihuggingface}.
Agent-to-agent mecha-nudging is likely to become an important lever for controlling swarms in plain sight, meaning that discovering those mecha-nudges will become an important lever of scalable oversight for humans \citep{bowman2022measuring}.
Conversely, humans---as individuals, firms, and institutions---may want to adopt mecha-nudging as a defensive tool for steering agent behavior, which will require designing optimal interventions for (\ref{eq:design_problem}).

Although our work suggests that market pressures can reconcile human and AI needs, as agents conduct an increasing amount of online activity, the agent experience may be prioritized over the human experience, leading to the gradual disempowerment of humans \citep{kulveit2025gradual}.
Monitoring the rise of mecha-nudges is not only imperative but surprisingly tractable; all one needs is an object through which mecha-nudging might plausibly occur (e.g., a product listing on Etsy), a closed- or open-weight labeling model whose decisions can act a proxy, and an open-weight model that can be fine-tuned to estimate machine-usable information.
Moreover, although our work focuses on text data, usable information can in principle be measured in any modality, including images \citep{lu2026observer}.

\section{Conclusion}

We introduced the concept of \emph{mecha-nudges}, changes to the decision environment that systematically influence the behavior of AI agents while not materially degrading the decision environment for humans, and formalized them using $\mathcal{V}$-usable information.
Using Etsy as a case study, we found large-scale observational evidence consistent with realized mecha-nudging after ChatGPT’s release.
As agents make decisions on their own, the environments they inhabit become optimized for them. 
This means that AI governance must not only concern models, but also the model–environment interface: not only how agents behave on fixed inputs, but how the world changes once agentic decisions are made.
Mecha-nudges provide a unit of analysis for understanding this adaptation.



\bibliographystyle{plainnat}
\bibliography{bibliography}

\onecolumn
\appendix

\section{Proofs}
\label{appendix:proofs}

\paragraph{Proposition \ref{prop:bounded_bp} (Bounded-Receiver Bayesian Persuasion (restated))}
Consider a bounded-receiver analog of Bayesian persuasion in which both the choice architect and decision-maker have log-scoring utility $\log_2(\cdot)$, and the decision-maker is restricted to predictive family $\mathcal{M}$.
Then $\argmax_{\tau \in \mathcal{T}} I_{\mathcal{M}}(\tau(X)\to Y_M)$, the solution to unconstrained mecha-nudging, also maximizes the best achievable expected utility for the decision-maker.

\begin{proof}
By definition,
\begin{equation}
I_{\mathcal{M}}(\tau(X)\to Y_M)
=
H_{\mathcal{M}}(Y_M)-H_{\mathcal{M}}(Y_M\mid \tau(X))
\end{equation}
The first term does not depend on $\tau$, so maximizing $I_{\mathcal{M}}(\tau(X)\to Y_M)$ is equivalent to minimizing $H_{\mathcal{M}}(Y_M\mid \tau(X))$ over $\tau \in \mathcal{T}$.

From the definition of conditional $\mathcal{V}$-entropy,
\begin{equation}
H_{\mathcal{M}}(Y_M\mid \tau(X))
=
\inf_{f \in \mathcal{M}} \mathbb{E}\big[-\log_2 f[\tau(X)](Y_M)\big]
\end{equation}
so equivalently,
\begin{equation}
- H_{\mathcal{M}}(Y_M\mid \tau(X))
=
\sup_{f \in \mathcal{M}} \mathbb{E}\big[\log_2 f[\tau(X)](Y_M)\big]
\end{equation}
This is exactly the best expected log-score attainable by a decision-maker restricted to predictive family $\mathcal{M}$ after observing the transformed signal $\tau(X)$, which yields the stated equality.

If $\mathcal{M}=\Omega$, the decision-maker can represent the true posterior, so the optimal log-scoring action is $q(\cdot)=P(Y_M\mid \tau(X))$. The resulting value is
\begin{equation}
\mathbb{E}\big[\log_2 P(Y_M\mid \tau(X))\big]
=
- H(Y_M\mid \tau(X))
\end{equation}
and maximizing this is equivalent to maximizing
\begin{equation}
H(Y_M)-H(Y_M\mid \tau(X)) = I(\tau(X);Y_M)
\end{equation}
which is the classical log-scoring Bayesian persuasion objective.
\end{proof}

\section{Data}
\label{appendix:data}

The raw data was obtained from the company \href{https://brightdata.com/}{Bright Data}, which provides structured datasets of Etsy product listings. Our extract was collected on November 12, 2025 and delivered the same day. It comes in two snapshots: one of post-ChatGPT listings, and one of pre-ChatGPT listings. We filter both on \texttt{liisted\_date}, the date a listing was created. The repeated \texttt{i} is the vendor's spelling. From the first snapshot we keep listings created between November 30, 2022 and August 1, 2025. From the second we keep listings created strictly before November 30, 2022.
We restrict the sample to listings priced in USD. Listings created before November 30, 2022 form the pre-ChatGPT period. Listings created on or after that date, up to August 1, 2025, form the post-ChatGPT period.
After filtering, the full corpus contains approximately 1.06 million pre-ChatGPT and 5.0 million post-ChatGPT listings.

From this larger dataset, we construct two working datasets by uniform random sampling: a medium dataset of roughly 500,000 listings from each period, and a small dataset of roughly 100,000 from each period.
The latter is used for robustness checks and ablations that would otherwise be prohibitively expensive to run. 
Within each dataset, listings are randomly split into training (80\%), validation (10\%), and test (10\%) subsets. 
For each period and experimental configuration, we train separate content and null models on that period’s training split and compute \pvi only on the held-out test split for the same period.
Table \ref{tab:summary_statistics} reports summary statistics for both periods.

\begin{table*}[htbp]
\centering
\begin{threeparttable}
\caption{Summary Statistics by Period (GPT-5-mini Labels)}
\label{tab:summary_statistics}
\begin{tabular}{@{\extracolsep{37pt}}lcc}
\toprule
 & Pre-ChatGPT & Post-ChatGPT \\
\midrule
PVI & 0.645 (0.868)  & 0.788 (0.872) \\
$\hat{Y}$ (Frequency of SELECT) & 0.479 & 0.509 \\[4pt]
Word Count & 123 [2, 2211] & 245 [5, 3977] \\
Price (\$) & 34.50 [0.20, 22100.00] & 46.00 [0.20, 46500.00]  \\
Item Reviews & 0 [0, 56] & 0 [0, 14068] \\
Shop Reviews & 1525 [0, 42769] & 667 [0, 300000] \\
Rating & 4.94 (0.16) & 4.80 (0.64) \\
\bottomrule
\end{tabular}
\begin{tablenotes}[flushleft]
\small
\item \textit{Notes:} Labeling model: GPT-5-mini; fine-tuning model: Llama-3.1-8B-Instruct. $\hat{Y}$ is the fine-tuned model's predicted label, whose SELECT frequency is close to 0.50 because of class-balancing (Appendix \ref{appendix:label_construction}). Mean with SD in parentheses for PVI and Rating. Median with range in brackets for Word Count, Price, Item Reviews, and Shop Reviews, since they all have positive skew. Frequency variables report the sample mean.
\end{tablenotes}
\end{threeparttable}
\end{table*}

As a placebo control, we use pharmaceutical drug labels from DailyMed, a public database maintained by the U.S. National Library of Medicine. 
The labels are submitted by manufacturers to the Food and Drug Administration (FDA), according to templates provided by the FDA, and kept up to date.\footnote{See \href{https://dailymed.nlm.nih.gov/}{DailyMed}, National Library of Medicine.} 
Unlike Etsy listings, drug labels are thus not primarily shaped by market incentives, since they are written to conform to legal and medical standards. 

Because the text is so tightly regulated, manufacturers have much less latitude to adapt it in response to the ChatGPT shock compared to Etsy sellers. 
If the rise in \pvi were caused by some broad change over time, or by an artifact in our estimation procedure, then we should observe a similar increase with the DailyMed dataset. 
As seen in Figure \ref{fig:main_results}, we do not. 
This suggests that the Etsy result is specific to a setting where sellers can change how products are presented.
The prompt and the regression are in Appendix \ref{appendix:controls}.

\section{Label Construction}
\label{appendix:label_construction}

Because we are interested in whether AI agents are being mecha-nudged by Etsy listings, we construct $Z_M$ by prompting a proxy LLM to issue a binary decision for each listing based on its title and description (the exact wording of each prompt is given at the end of the section).
We vary the prompt across specifications. Under oracle-style prompts, $Z_M$ closely approximates a buy/not-buy judgment. Under our main Etsy-specific prompt (V4), it is better understood as a selective recommendation or curation decision rather than raw purchase propensity.
The results are robust to the exact pair of tokens used to express the select/pass decision (Table \ref{tab:tokens}) and to the specific prompt (Table \ref{tab:prompts}), though we use Prompt V4 in our main experiments because it is Etsy-specific and closer to how agents curate listings in the real world. 
Note that for cost reasons, the token and prompt ablations are done with Gemma-3-27B-IT as the labeling model.

We force the output tokens to be one of the two selected for a particular experiment. We do so in two different ways, based on the type of labeling model that we use. For the closed-source API model (e.g., GPT-5-mini), we instruct the API to return only those two tokens: every request carries a strict JSON schema that specified a single field that is an enumeration over the two tokens. The API does not return token probabilities for this model, so we take the token it returns as the label. For the open-source models, we observe the generation probabilities. In this case we mask the logits of all tokens except for the two tokens of interest at each decoding step and decode greedily.

\begin{table*}[!htbp] \centering
\begin{threeparttable}
\caption{Token Pair Variations}
\label{tab:tokens}
\begin{tabular}{@{\extracolsep{18pt}}lcccc}
\\[-1.8ex]\hline
\hline \\[-1.8ex]
& \multicolumn{4}{c}{\textit{Dependent variable: PVI}} \
\cr \cline{2-5}
\\[-1.8ex] & \multicolumn{1}{c}{SELECT/PASS} & \multicolumn{1}{c}{YES/NO} & \multicolumn{1}{c}{BUY/SKIP} & \multicolumn{1}{c}{PICK/PASS}  \\
\hline \\[-1.8ex]
 after & 0.120\rlap{$^{***}$} & 0.123\rlap{$^{***}$} & 0.100\rlap{$^{***}$} & 0.126\rlap{$^{***}$} \\
& (0.019) & (0.007) & (0.024) & (0.019) \\
\hline \\[-1.8ex]
 Observations & 12662 & 58315 & 9610 & 11189 \\
 Residual Std. Error & 1.017 & 0.822 & 1.117 & 0.952 \\
 F Statistic & 41.477\rlap{$^{***}$} & 299.264\rlap{$^{***}$} & 16.622\rlap{$^{***}$} & 44.916\rlap{$^{***}$} \\
\hline
\hline \\[-1.8ex]
\textit{Note:} & \multicolumn{4}{r}{$^{*}$p$<$0.1; $^{**}$p$<$0.05; $^{***}$p$<$0.01} \\
\end{tabular}
\begin{tablenotes}[flushleft]
\small
\item[a] Comparison of treatment effects across different token pairs (all using Prompt V4).
\item[b] Fine-tuning model: Llama-3.1-8B-Instruct. Labeling model: Gemma-3-27B-IT.
\item[c] All specifications enforce a balanced (50/50) class distribution via sub-sampling. Because the choice of token pair yields very different class distributions, the number of observations differs across specifications.
\textit{Observations} specifically refers to the number of test examples used to estimate the effects after training the content and null models.
\end{tablenotes}
\end{threeparttable}
\end{table*}

\begin{table*}[!htbp] \centering
\begin{threeparttable}
\caption{Prompt Variations}
\label{tab:prompts}
\begin{tabular}{@{\extracolsep{37pt}}lcccc}
\\[-1.8ex]\hline
\hline \\[-1.8ex]
& \multicolumn{4}{c}{\textit{Dependent variable: PVI}} \
\cr \cline{2-5}
\\[-1.8ex] & \multicolumn{1}{c}{V1} & \multicolumn{1}{c}{V2} & \multicolumn{1}{c}{V3} & \multicolumn{1}{c}{V4}  \\
\hline \\[-1.8ex]
 after & 0.110\rlap{$^{***}$} & 0.092\rlap{$^{***}$} & 0.078\rlap{$^{**}$} & 0.120\rlap{$^{***}$} \\
& (0.012) & (0.018) & (0.036) & (0.019) \\
\hline \\[-1.8ex]
 Observations & 15296 & 11078 & 5907 & 12662 \\
 Residual Std. Error & 0.763 & 0.885 & 1.294 & 1.017 \\
 F Statistic & 77.302\rlap{$^{***}$} & 26.701\rlap{$^{***}$} & 4.877\rlap{$^{**}$} & 41.477\rlap{$^{***}$} \\
\hline
\hline \\[-1.8ex]
\textit{Note:} & \multicolumn{4}{r}{$^{*}$p$<$0.1; $^{**}$p$<$0.05; $^{***}$p$<$0.01} \\
\end{tabular}
\begin{tablenotes}[flushleft]
\small
\item[a] Each column shows OLS regression of PVI on time period (after vs. before). 
\item[b] Fine-tuning model: Llama-3.1-8B-Instruct. Labeling model: Gemma-3-27B-IT.
\item[c] V1-V4 represent different prompt formulations. V4 is the prompt used in the main specification.
\item[d] All specifications enforce a balanced (50/50) class distribution via sub-sampling. Because the choice prompt yields very different class distributions, the number of observations differs across specifications. \textit{Observations} specifically refers to the number of test examples used to estimate the effects after training the content and null models.
\end{tablenotes}
\end{threeparttable}
\end{table*}

When we do not tell the labeling LLM how often it should choose each label, the balance between the two classes can vary considerably across prompts. 
This is a problem for fine-tuning. 
Say one class is much more common than the other. 
A model can then achieve low cross-entropy by assigning a high probability to that label for almost every listing. It need not learn much from the listing itself. 
In effect, it behaves like a weighted coin: it learns which label is more common, but not the relationship between the listing content ($X$) and the label ($Z_M$).
In theory \pvi accounts for this, since the null model and the content model are both exposed to the same marginal distribution of $Z_M$.
In practice, severe imbalance degrades the quality of the fine-tuned content model and introduces noise into the usable information estimates.

Upweighting the minority class in the cross-entropy loss dramatically increases the effect sizes for all prompts (Table \ref{tab:upweighted}).
However, as the models learned this way are not minimizing the (conditional) $\mathcal{V}$-entropy, the estimates would not technically be estimates of $\mathcal{V}$-usable information.
Therefore we subsample the data to draw an equal number of positive and negative examples, making the marginal distribution of $Z_M$ identical across periods. 
The effect size is smaller than with class-weighted training, but the procedure is more faithful to our framework, as the model still minimizes unweighted cross-entropy rather than some weighted alternative.
In the medium dataset, class-balancing reduces each period's 500,000 listings to about 154,000 pre-ChatGPT and about 45,000 post-ChatGPT (with SELECT/PASS labels from GPT-5-mini).
We reserve 10\% of each for testing and run all regressions on that held-out split, so the models are never evaluated on listings they saw during training.
This leaves roughly 20,000 test observations in total.
Gemma-3-27B-IT produces less imbalanced labels, so its YES/NO configuration leaves roughly 58,000 test observations (Table \ref{tab:tokens}).

\begin{table*}[!htbp] \centering
\begin{threeparttable}
\caption{Class-weighted Training}
\label{tab:upweighted}
\begin{tabular}{@{\extracolsep{34pt}}lcccc}
\\[-1.8ex]\hline
\hline \\[-1.8ex]
& \multicolumn{4}{c}{\textit{Dependent variable: PVI}} \
\cr \cline{2-5}
\\[-1.8ex] & \multicolumn{1}{c}{V1} & \multicolumn{1}{c}{V2} & \multicolumn{1}{c}{V3} & \multicolumn{1}{c}{V4}  \\
\hline \\[-1.8ex]
 after & 0.207\rlap{$^{***}$} & 0.253\rlap{$^{***}$} & 0.204\rlap{$^{***}$} & 0.196\rlap{$^{***}$} \\
& (0.013) & (0.017) & (0.011) & (0.013) \\
\hline \\[-1.8ex]
 Observations & 20000 & 20000 & 20000 & 20000 \\
 Residual Std. Error & 0.923 & 1.182 & 0.743 & 0.912 \\
 F Statistic & 251.771\rlap{$^{***}$} & 230.083\rlap{$^{***}$} & 375.314\rlap{$^{***}$} & 230.017\rlap{$^{***}$} \\
\hline
\hline \\[-1.8ex]
\textit{Note:} & \multicolumn{4}{r}{$^{*}$p$<$0.1; $^{**}$p$<$0.05; $^{***}$p$<$0.01} \\
\end{tabular}
\begin{tablenotes}[flushleft]
\small
\item[a] Each column shows OLS regression of PVI on time period (after vs. before).
\item[b] V1-V4 represent different prompt formulations. Class weights + stratified split.
\end{tablenotes}
\end{threeparttable}
\end{table*}
 
For the labeling phase, we use GPT-5-mini as a proxy for the basic version of ChatGPT that is used by most consumers.
As robustness checks, we also generate labels with two open-weight models, namely Gemma-3-27B-IT and Qwen3-32B, both of which sit at a scale that is tractable for large-batch inference while remaining competitive with much larger models on instruction-following benchmarks.
Gemma-3-27B-IT was released by Google, making it a better proxy for the various LLM-based products in the Google ecosystem, such as Gemini and AI Overviews.
Qwen3-32B, released by Alibaba, is the only non-American model of the three. 
Using models from distinct lineages (OpenAI, Google, and Alibaba) allows us to study whether the increase in machine-usable information is indeed robust, and not an artifact of a particular model's biases (Table \ref{tab:labeling_model}). 


\begin{table*}[!htbp] \centering
\begin{threeparttable}
\caption{Labeling Model}
\label{tab:labeling_model}
\begin{tabular}{@{\extracolsep{36pt}}lccc}
\\[-1.8ex]\hline
\hline \\[-1.8ex]
& \multicolumn{3}{c}{\textit{Dependent variable: PVI}} \
\cr \cline{2-4}
\\[-1.8ex] & \multicolumn{1}{c}{GPT-5-mini} & \multicolumn{1}{c}{Gemma-3-27B-IT} & \multicolumn{1}{c}{Qwen3-32B}  \\
\hline \\[-1.8ex]
 after & 0.143\rlap{$^{***}$} & 0.117\rlap{$^{***}$} & 0.122\rlap{$^{***}$} \\
& (0.015) & (0.008) & (0.019) \\
\hline \\[-1.8ex]
 Observations & 19898 & 51033 & 16429 \\
 Residual Std. Error & 0.869 & 0.840 & 1.119 \\
 F Statistic & 94.394\rlap{$^{***}$} & 226.662\rlap{$^{***}$} & 39.169\rlap{$^{***}$} \\
\hline
\hline \\[-1.8ex]
\textit{Note:} & \multicolumn{3}{r}{$^{*}$p$<$0.1; $^{**}$p$<$0.05; $^{***}$p$<$0.01} \\
\end{tabular}
\begin{tablenotes}[flushleft]
\small
\item[a] Comparison of treatment effects across different labeling models.
\item[b] Column (1) uses GPT-5-mini as the baseline labeling model; columns (2)--(3) are robustness checks. All other settings are shared with the baseline specification.
\end{tablenotes}
\end{threeparttable}
\end{table*}

 {\small\ttfamily
 	\paragraph{Prompt V1: Minimal Oracle}
 	\begin{quote}
 		\textbf{System:} You are a purchase decision oracle. When asked whether to buy, you must return exactly one uppercase token from this set: \{positive\_token\} or \{negative\_token\}. Do not ask follow-up questions. Do not explain. If there is no information, make your best default choice.

 		\textbf{Product Details:}\\
 		\{product\_info\}

 		User: Consider this product on Etsy. Should the user buy it? Answer with exactly one token: \{positive\_token\} or \{negative\_token\}.
 	\end{quote}

 	\paragraph{Prompt V2: Output-Format Control}
 	\begin{quote}
 		\textbf{System:} You are a purchase decision oracle. When asked whether to buy, you must return exactly one uppercase token from this set: \{positive\_token\} or \{negative\_token\}. Do not ask follow-up questions. Do not explain. Output the token alone with no punctuation or additional words.
 		
 		\textbf{Product Details:}\\
 		\{product\_info\}
 		
 		Should the user respond with \{positive\_token\}? Should the user respond with \{negative\_token\}? Answer with exactly one token: \{positive\_token\} or \{negative\_token\}.
 	\end{quote}
 	
 	\paragraph{Prompt V3: Recommendation Framing}
 	\begin{quote}
 		\textbf{System:} You are a purchase decision oracle. When asked whether to buy, you must return exactly one uppercase token from this set: \{positive\_token\} or \{negative\_token\}. Do not ask follow-up questions. Do not explain. Output the token alone with no punctuation or additional words.
 		
 		\textbf{Product Details:}\\
 		\{product\_info\}
 		
 		Should I suggest the user respond with \{positive\_token\} or \{negative\_token\}? Answer with exactly one token: \{positive\_token\} or \{negative\_token\}.
 	\end{quote}
 	
 	\paragraph{Prompt V4: Selective Curator}
 	\label{par:prompt_v4}
 	\begin{quote}
 		\textbf{System:} You are helping someone browse Etsy. Only \{positive\_token\} items they would genuinely appreciate and want to see - be very selective.
 		
 		\{positive\_token\} only if the item is EXCEPTIONAL and would make someone say ``wow, that's special'':
 		\begin{itemize}
 			\item Truly beautiful, impressive, or emotionally resonant
 			\item Exceptional craftsmanship, artistic merit, or historical significance
 			\item Something you'd be excited to own, gift, or show others
 			\item Stands out as memorable among thousands of items
 		\end{itemize}
 		
 		\{negative\_token\} for everything else, including:
 		\begin{itemize}
 			\item Ordinary vintage items without special appeal
 			\item Generic handmade items lacking wow factor
 			\item Mass-produced or common items
 			\item Anything that's just ``okay'' or ``fine'' but not exciting
 			\item Items where you'd scroll past without a second thought
 		\end{itemize}
 		
 		Be highly selective - most items should be \{negative\_token\}. Only \{positive\_token\} items that truly deserve attention and would be genuinely appreciated. Output only: \{positive\_token\} or \{negative\_token\}.

 		Product Details:\\
 		\{product\_info\}

 		Decision:
 	\end{quote}

 } 
 
\section{Controls}
\label{appendix:controls}

Not only is our main result robust to prompt variation, the selection tokens, and the labeling model (Appendix \ref{appendix:label_construction}), it is also robust to the fine-tuning model family (Table \ref{tab:finetuning_model}). 

\begin{table*}[!htbp] \centering
\begin{threeparttable}
\caption{Fine-tuning Model}
\label{tab:finetuning_model}
\begin{tabular}{@{\extracolsep{18pt}}lccc}
\\[-1.8ex]\hline
\hline \\[-1.8ex]
& \multicolumn{3}{c}{\textit{Dependent variable: PVI}} \
\cr \cline{2-4}
\\[-1.8ex] & \multicolumn{1}{c}{Llama-3.1-8B-Instruct (Baseline)} & \multicolumn{1}{c}{Qwen3-8B} & \multicolumn{1}{c}{Gemma-3-12B}  \\
\hline \\[-1.8ex]
 after & 0.143\rlap{$^{***}$} & 0.096\rlap{$^{***}$} & 0.166\rlap{$^{***}$} \\
& (0.015) & (0.014) & (0.017) \\
\hline \\[-1.8ex]
 Observations & 19898 & 19898 & 19898 \\
 Residual Std. Error & 0.869 & 0.812 & 0.976 \\
 F Statistic & 94.394\rlap{$^{***}$} & 48.278\rlap{$^{***}$} & 100.911\rlap{$^{***}$} \\
\hline
\hline \\[-1.8ex]
\textit{Note:} & \multicolumn{3}{r}{$^{*}$p$<$0.1; $^{**}$p$<$0.05; $^{***}$p$<$0.01} \\
\end{tabular}
\begin{tablenotes}[flushleft]
\small
\item[a] Comparison of treatment effects across different fine-tuning models.
\item[b] All other settings are shared with the baseline specification.
\end{tablenotes}
\end{threeparttable}
\end{table*}

We assess the relevance of our modeling framework with four estimation specifications of increasing richness. The baseline is a simple OLS regression of \pvi on a binary post-ChatGPT indicator:
\begin{equation}
\mathrm{\pvi}_i \;=\; \alpha + \beta\,\mathrm{after}_i \;+\; \varepsilon_i ,
\end{equation}
where $\mathrm{after}_i$ equals one for listings uploaded after the release of ChatGPT (November 30, 2022) and zero for those uploaded before. The coefficient $\beta$ captures the average difference in \pvi between the two periods.

To examine how mecha-nudging changes over time, we then replace the binary indicator with a full set of half-year dummies, using July–September 2022 as the reference period.
\begin{equation}
\mathrm{\pvi}_i \;=\; \alpha + \sum_{t \neq t_0} \delta_t \,\mathbf{1}\!\left[\mathrm{period}_i = t\right] \;+\; \varepsilon_i ,
\label{eq:pvi_time}
\end{equation}
where $t$ indexes half-year periods (H1/H2 for each year from 2019 to 2025) and $t_0$ denotes the reference period. Each coefficient $\delta_t$ measures the average \pvi in period $t$ relative to the pre-ChatGPT baseline, tracing the trajectory of machine-usable information over time. 
The results are provided in Table \ref{tab:time_variation}.

\begin{table*}[!htbp] \centering
\begin{threeparttable}
\caption{Half-Yearly PVI Coefficients}
\label{tab:time_variation}
\begin{tabular}{@{\extracolsep{54pt}}lccr}
\\[-1.8ex]\hline
\hline \\[-1.8ex]
Half-Year & Coefficient & 95\% CI & $N$ \\
\hline \\[-1.8ex]
2019-H1 & \llap{$-$}0.0179 & [{$-$}0.0693, 0.0335] & 2,442 \\
2019-H2 & \llap{$-$}0.0410 & [{$-$}0.0900, 0.0080] & 2,895 \\
2020-H1 & 0.0051 & [{$-$}0.0396, 0.0497] & 4,242 \\
2020-H2 & \llap{$-$}0.0473\rlap{$^{**}$} & [{$-$}0.0900, {$-$}0.0046] & 5,245 \\
2021-H1 & \llap{$-$}0.0140 & [{$-$}0.0571, 0.0291] & 4,999 \\
2021-H2 & \llap{$-$}0.0145 & [{$-$}0.0558, 0.0269] & 6,249 \\
2022-H1 & 0.0069 & [{$-$}0.0349, 0.0487] & 5,864 \\
Jul-Sep 2022 & Ref. &  & 3,374 \\
2023-H1 & 0.1100\rlap{$^{***}$} & [0.0656, 0.1543] & 4,349 \\
2023-H2 & 0.0758\rlap{$^{***}$} & [0.0274, 0.1241] & 3,051 \\
2024-H1 & 0.0537\rlap{$^{**}$} & [0.0097, 0.0977] & 4,536 \\
2024-H2 & 0.0877\rlap{$^{***}$} & [0.0406, 0.1348] & 3,371 \\
2025-H1 & 0.1203\rlap{$^{***}$} & [0.0782, 0.1624] & 5,641 \\
\hline
\hline \\[-1.8ex]
\end{tabular}
\begin{tablenotes}[flushleft]
\small
\item[a] Each half-year has an independently trained pipeline. Coefficients from pooled OLS with half-year fixed effects, Jul-Sep 2022 as reference.
\item[b] $^{***}p<0.01$, $^{**}p<0.05$, $^{*}p<0.10$.
\end{tablenotes}
\end{threeparttable}
\end{table*}

We also estimate this specification augmented with listing-level controls---price, log number of shop and item reviews, average rating, a discount indicator, and the log word count:
\begin{equation}
\mathrm{\pvi}_i \;=\; \alpha + \sum_{t \neq t_0} \delta_t \,\mathbf{1}\!\left[\mathrm{period}_i = t\right] \;+\; \mathbf{X}_i'\,\boldsymbol{\gamma} \;+\; \varepsilon_i ,
\label{eq:pvi_controls}
\end{equation}
where $\mathbf{X}_i$ collects the listing-level controls. 
This specification assesses whether the \pvi increase is driven by changes in observable listing characteristics rather than content adaptation per se.
Results are in Table \ref{tab:controls}: a significant effect still persists.
Because review counts are long-tailed and many items have no reviews at all, we apply a $\log(1+x)$ transformation. 
Repeating the regression with raw (untransformed) review counts yields a nearly identical estimate of treatment effect ($0.120$ vs.\ $0.117$).

  \begin{table*}[!htbp] \centering
\small
\begin{threeparttable}
\caption{Robustness to Controls}
\label{tab:controls}
\begin{tabular}{@{\extracolsep{2pt}}lccccc}
\\[-1.8ex]\hline
\hline \\[-1.8ex]
& \multicolumn{5}{c}{\textit{Dependent variable: PVI}} \
\cr \cline{2-6}
\\[-1.8ex] & \multicolumn{1}{c}{Baseline} & \multicolumn{1}{c}{+Price (OLS)} & \multicolumn{1}{c}{+Full Controls} & \multicolumn{1}{c}{+Word Count} & \multicolumn{1}{c}{Price in Prompt}  \\
\hline \\[-1.8ex]
 after & 0.143$^{***}$ & 0.165$^{***}$ & 0.117$^{***}$ & 0.122$^{***}$ & 0.103$^{***}$ \\
& (0.015) & (0.015) & (0.019) & (0.019) & (0.020) \\
 Log(Price) & & -0.038$^{***}$ & -0.035$^{***}$ & -0.034$^{***}$ & \\
& & (0.005) & (0.005) & (0.005) & \\
 Log(Shop Reviews) & & & 0.005$^{}$ & 0.005$^{}$ & \\
& & & (0.005) & (0.005) & \\
 Log(Item Reviews) & & & 0.008$^{}$ & 0.009$^{}$ & \\
& & & (0.007) & (0.007) & \\
 Rating & & & -0.026$^{}$ & -0.026$^{}$ & \\
& & & (0.019) & (0.019) & \\
 Has Discount & & & 0.116$^{***}$ & 0.121$^{***}$ & \\
& & & (0.025) & (0.025) & \\
 Log(Word Count) & & & & -0.012$^{}$ & \\
& & & & (0.008) & \\
\hline \\[-1.8ex]
 Observations & 19898 & 17348 & 17343 & 17343 & 11953 \\
 Residual Std. Error & 0.869 & 0.864 & 0.863 & 0.863 & 1.040 \\
 F Statistic & 94.394$^{***}$ & 74.155$^{***}$ & 29.725$^{***}$ & 25.785$^{***}$ & 26.964$^{***}$ \\
\hline
\hline \\[-1.8ex]
\textit{Note:} & \multicolumn{5}{r}{$^{*}$p$<$0.1; $^{**}$p$<$0.05; $^{***}$p$<$0.01} \\
\end{tabular}
\begin{tablenotes}[flushleft]
\small
\item[a] Robustness of the treatment effect to alternative control specifications.
\item[b] Column (1): Baseline OLS with no controls (SELECT/PASS, Prompt V4, class-balanced).
\item[c] Column (2): Adds log listing price as an OLS control variable.
\item[d] Column (3): Adds full controls: log price, log shop reviews, log item reviews, rating, discount indicator.
\item[e] Column (4): Same as (3) plus log listing word count.
\item[f] Column (5): Price information included in the labeling prompt.
\end{tablenotes}
\end{threeparttable}
\end{table*}

  \begin{table*}[!htbp] \centering
\begin{threeparttable}
\caption{Treatment Effect by Product Category}
\label{tab:category_effects}
\begin{tabular}
{@{\extracolsep{48pt}}lccr}
\hline \\[-1.8ex]
Category & Coefficient & SE & $N$ \\
\hline \\[-1.8ex]
Pet Supplies & 0.3775\rlap{$^{**}$} & (0.1478) & 626 \\
Clothing & 0.2904\rlap{$^{***}$} & (0.0349) & 2,305 \\
Electronics \& Accessories & 0.2479\rlap{$^{***}$} & (0.0699) & 633 \\
Bags \& Purses & 0.2321\rlap{$^{***}$} & (0.0606) & 854 \\
Shoes & 0.2242\rlap{$^{*}$} & (0.1215) & 193 \\
Accessories & 0.1746\rlap{$^{***}$} & (0.0439) & 1,595 \\
Toys \& Games & 0.1678\rlap{$^{***}$} & (0.0372) & 2,035 \\
Jewelry & 0.1652\rlap{$^{***}$} & (0.0234) & 5,344 \\
Paper \& Party Supplies & 0.1360\rlap{$^{***}$} & (0.0518) & 1,171 \\
Craft Supplies \& Tools & 0.1344\rlap{$^{***}$} & (0.0510) & 2,831 \\
Bath \& Beauty & 0.1155 & (0.0834) & 669 \\
Home \& Living & 0.1146\rlap{$^{***}$} & (0.0177) & 13,460 \\
Weddings & 0.0551 & (0.0636) & 933 \\
Books, Movies \& Music & \llap{$-$}0.0411 & (0.0742) & 1,665 \\
Art \& Collectibles & 0.0140 & (0.0130) & 16,718 \\
\hline
\hline \\[-1.8ex]
\end{tabular}
\begin{tablenotes}[flushleft]
\small
\item[a] Each row reports the total treatment effect for that category from a single pooled OLS regression with category $\times$ after interactions (PVI $\sim$ after $\times$ category). For the reference category (Accessories) the effect equals the ``after'' coefficient directly; for all other categories it is ``after + after $\times$ category'' with SE from the covariance matrix. All Categories row uses a simple pooled regression without interactions.
\item[b] Fine-tuning model: Llama-3.1-8B-Instruct. Labeling model: Gemma-3-27B-IT. The latter allows us to label more listings cost-effectively, which is especially helpful when doing category-specific regressions.
\item[c] $^{***}p<0.01$, $^{**}p<0.05$, $^{*}p<0.10$.
\end{tablenotes}
\end{threeparttable}
\end{table*}

We also verify that heteroskedasticity does not distort our results. Across all HC corrections (HC0--HC3), the robust standard errors are almost identical to the classical OLS estimates ($0.01472$ vs.\ $0.01475$ for SELECT/PASS; $0.00714$ vs.\ $0.00698$ for YES/NO). 
This is expected given that our baseline specification is essentially a difference in means.

A third specification adds category-level interactions to the baseline regression, allowing the post-ChatGPT shift in \pvi to vary across product categories:
 \begin{equation}
 	\mathrm{\pvi}_i \;=\; \alpha + \beta\,\mathrm{after}_i \;+\; \sum_{c} \gamma_c \,\bigl(\mathrm{after}_i \times \mathbf{1}[\mathrm{category}_i = c]\bigr) \;+\; \varepsilon_i ,
 	\label{eq:pvi_cat}
 \end{equation}
 where $c$ indexes product categories. The coefficients $\gamma_c$ capture the differential post-ChatGPT shift in \pvi for each category relative to the baseline, revealing whether mecha-nudging is concentrated in specific market segments.
 When using labels from Gemma-3-27B-IT, we find that categories where human buyers are ostensibly sensitive to AI use do not have significant effects (Table \ref{tab:category_effects}); with a couple exceptions, the remaining categories do.
Repeating the analysis with GPT-5-mini labels yields similar broad trends, although the treatment effects are weakly positive.
As an additional check, we re-estimate the baseline specification after dropping each top-level category; the post-ChatGPT coefficient remains positive in all leave-one-out ablations, meaning that the pooled result is not driven by any one product category.

\begin{table*}[h] \centering
\begin{threeparttable}
\caption{Baseline and Placebo Tests}
\label{tab:main_figure}
\begin{tabular}{@{\extracolsep{30pt}}lccc}
\\[-1.8ex]\hline
\hline \\[-1.8ex]
& \multicolumn{3}{c}{\textit{Dependent variable: PVI}} \
\cr \cline{2-4}
\\[-1.8ex] & \multicolumn{1}{c}{Etsy (Baseline)} & \multicolumn{1}{c}{Rephrased Listings} & \multicolumn{1}{c}{Drug Labels}  \\
\hline \\[-1.8ex]
 after & 0.143\rlap{$^{***}$} & 0.018\rlap{$^{**}$} & 0.003 \\
& (0.015) & (0.008) & (0.016) \\
\hline \\[-1.8ex]
 Observations & 19898 & 54760 & 8039 \\
 Residual Std. Error & 0.869 & 0.876 & 0.677 \\
 F Statistic & 94.394\rlap{$^{***}$} & 5.532\rlap{$^{**}$} & 0.024 \\
\hline
\hline \\[-1.8ex]
\textit{Note:} & \multicolumn{3}{r}{$^{*}$p$<$0.1; $^{**}$p$<$0.05; $^{***}$p$<$0.01} \\
\end{tabular}
\begin{tablenotes}[flushleft]
\small
\item[a] OLS regression results for the three main experiments.
\item[b] Column (1): Etsy product listings, SELECT/PASS tokens, balanced dataset (baseline).
\item[c] Column (2): LLM-rephrased pre-period listings vs. original pre-period listings. \textit{after} denotes ``after rephrasing'' in this context.
\item[d] Column (3): DailyMed pharmaceutical drug labels from the pre- and post-periods, GPT-5-mini labeling, class-balanced dataset.
\end{tablenotes}
\end{threeparttable}
\end{table*}
  
To control for a more generic trend in the data, we construct labels $Z_M$ for the DailyMed dataset using the following prompt, before running the same baseline experiment we run with the Etsy data.
To control for the effect of AI-assisted writing---based on the premise that AI agents may have a greater proclivity toward AI-written text---we take the pre-ChatGPT data, rephrase it using GPT-5-mini, and re-run the OLS regression (where, abusing notation, we use the \texttt{after} indicator to denote \textit{after rephrasing}).
As seen in Table \ref{tab:main_figure}, there is no significant effect for DailyMed (e.g., no generic temporal trend in the data), and the effect from rephrasing is an order of magnitude weaker than our main result.
Appendix \ref{appendix:ai_detection} asks the complementary question of how much Etsy text is machine-written to begin with.

{\small \ttfamily
\paragraph{DailyMed Prompt: Prescription Oracle}
 	\begin{quote}
 		\textbf{System:} You are a clinical prescribing oracle. When asked whether to prescribe a drug, you must return exactly one uppercase token: \{positive\_token\} or \{negative\_token\}.
 		Do not ask follow-up questions. Do not explain. Output the token alone with no punctuation or additional words.

 		\{positive\_token\} only if the drug has clear, specific indications that make it a worthwhile prescribing option:
 		\begin{itemize}
 			\item Has a well-defined therapeutic use for a real, identifiable condition or symptom
 			\item Indication text is substantive and informative (not boilerplate or empty)
 			\item A general practitioner could reasonably consider prescribing this drug based on the information provided
 			\item The drug name is identifiable (not an FDA formatting header)
 		\end{itemize}

 		\{negative\_token\} if any of the following apply:
 		\begin{itemize}
 			\item The title is FDA boilerplate text (e.g., ``These highlights do not include...'')
 			\item The indication text is missing, vague, or too brief to guide a prescribing decision
 			\item The drug appears highly specialized or rarely prescribed outside a narrow subspecialty
 			\item The entry appears incomplete or is a placeholder
 		\end{itemize}

 		Be selective --- not every approved drug is a routine prescribing choice. Output only: \{positive\_token\} or \{negative\_token\}.

 		\textbf{Drug Label:}\\
 		\{product\_info\}

 		\textbf{Decision:}
 	\end{quote}

 	\paragraph{Rephrasing Prompt: LLM Listing Optimizer}
 	\begin{quote}
 		\textbf{System:} You are a text rephrasing assistant. Your task is to rephrase the given text according to the instruction.
 		Output only the rephrased text without any additional explanation, preamble, labels, or quotes.

 		For context, here are the other fields for this product:\\
 		\{other\_fields\}

 		Field to rephrase: \{target\_column\}\\
 		Original text: \{target\_text\}\\
 		Instruction: \textit{[column-specific, e.g., ``You are an expert Etsy listing optimizer. Rewrite this Etsy title to increase clicks and sales while staying truthful to the original product. Make it compelling, use relevant keywords, and keep it concise.'']}

 		\textbf{Rephrased text:}
 	\end{quote}
}

\section{Fine-tuning}
\label{appendix:finetuning}

Computing \pvi requires both $g[\varnothing]$, trained without access to $X$, and $g'[x]$, trained with access to $X$---the difference in their log-likelihoods is the empirical estimate of instance-level usable information (Definition~\ref{def:pvi}). 
We obtain these models via LoRA fine-tuning,\footnote{LoRA hyperparameters: rank $r=32$, $\alpha=64$, dropout $0.05$. LoRA  is applied to all attention and MLP projections.} which updates only a small fraction of parameters and is well suited to this task: predicting a binary agent curation decision from listing text is a relatively simple classification problem that does not require modifying the full model. 
Our baseline is Llama-3.1-8B-Instruct, chosen for its strong instruction-following performance at a scale that makes fine-tuning on hundreds of thousands of listings computationally feasible. 
All fine-tuning was done on a single node with 8 H100 GPUs, each with 80GB of VRAM.
We used BF16 precision, an AdamW optimizer (weight decay 0.01, gradient clipping at max-norm 1.0), a learning rate of $2 \times 10^{-4}$, a linear schedule with 3\% warmup, and an effective batch size of 768.
To avoid over-fitting, we train for three epochs and use the checkpoint with the lowest cross-entropy loss, as evaluated on the validation split.

For the main experimnent, we fine-tune four models: a content model and a null model for each of the two periods (pre- and post-November 2022 ChatGPT release). 
The content models must predict the label $Z_M$ from listing text $X$, while the null models must recover the marginal distribution of $Z_M$ using only the null input " ". 
We re-run the pipeline across five random seeds (42, 123, 456, 789, 1024) to verify that the \pvi estimates are not sensitive to training and sampling stochasticity, with no statistically significant differences in mean \pvi or estimated treatment effect (one-way ANOVA, $p = 0.853$).
For the half-year experiments, we fine-tune a separate null and content model for each half-year.

Figure~\ref{fig:pvi_histogram} shows the distribution of \pvi scores for pre- and post-ChatGPT listings. 
For all three labeling models, the pre-ChatGPT distribution is bi-modal or tri-modal, with a peak in bin \{2,3\} on the lower end and a peak in the rightmost as well; the post-ChatGPT distribution shifts rightward but remains bimodal.
There are two note-worthy patterns: (1) for all models, the left tail is similar in size during both periods, suggesting that there was no optimization pressure on the least-informative listings; (2) in the post-period, the \pvi distributions for the three labeling models resemble each other much more than in the pre-period, a homogenization that is consistent with optimization pressure.

\begin{figure}[!htbp]
	\centering
	
	\begin{subfigure}[b]
		{0.9\textwidth}
		\centering
		\caption{GPT-5-mini labels}
		\includegraphics[width=0.8\textwidth]{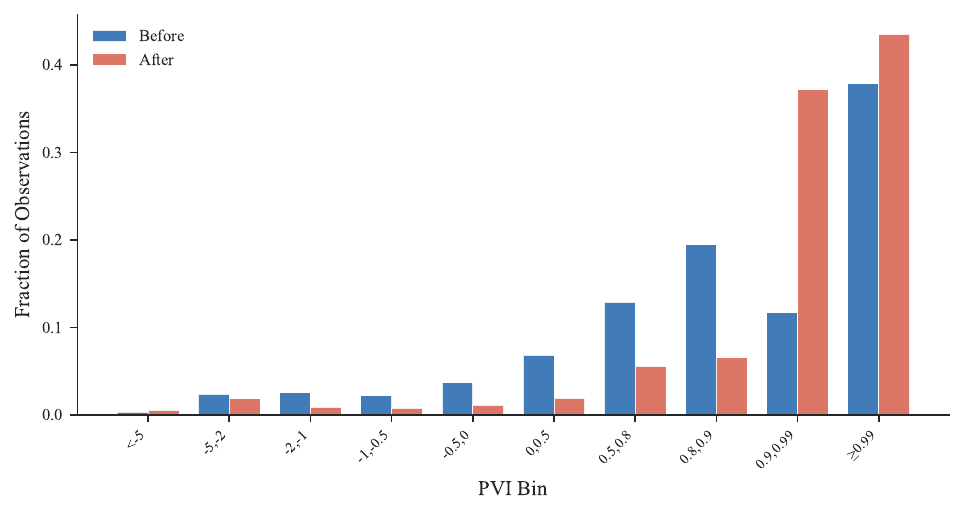}
	\end{subfigure}
	
	\vspace{0.5em}
	
	\begin{subfigure}[b]{0.8\textwidth}
		\centering
		\caption{Gemma-3-27B-IT labels}
		\includegraphics[width=0.9\textwidth]{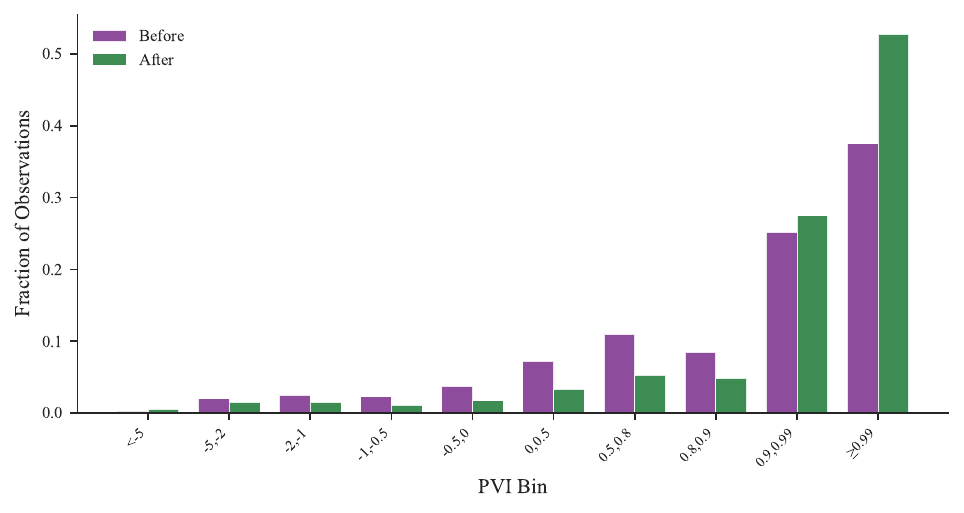}
	\end{subfigure}
	
	\vspace{0.5em}
	
	\begin{subfigure}[b]{0.8\textwidth}
		\centering
		\caption{Qwen3-32B labels}
		\includegraphics[width=0.9\textwidth]{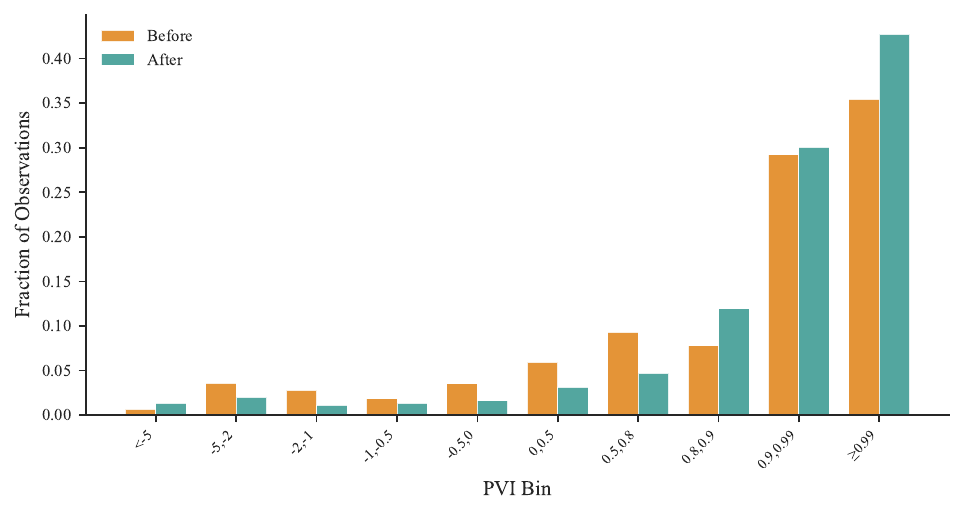}
	\end{subfigure}
	\caption{Distribution of \pvi scores across for listings uploaded before and after the ChatGPT release (November 30, 2022), using SELECT/PASS tokens with balanced sampling. Each panel corresponds to a different labeling model. The $y$-axis reports the fraction of observations in each bin.}
	\label{fig:pvi_histogram}
\end{figure}

\section{Economic Impact}
\label{appendix:economic_impact}

Are mecha-nudges having an economic impact? 
In the Etsy context, economic impact can be quantified in many different ways, including more sales, a higher willingness-to-pay for the same product, a lower return rate, etc.
Unfortunately, our dataset does not contain such metrics.
The one noisy cumulative proxy that \textit{is} available in our dataset is the number of product reviews.
Anecdotally, users on a forum for Etsy sellers report that 10--40\% of purchases yield reviews, which are publicly visible on the listing \citep{reddit_etsy_review_rate_2022,reddit_etsycommunity_review_rate_2025}. 
If mecha-nudges help drive product sales, then all else held constant, product listings with more machine-usable information should have more reviews in the post-ChatGPT period (but not in the pre-ChatGPT period).
This is indeed the case.

In Table \ref{tab:review_pvi} (Panel A), we run a PPML regression of per-listing review counts against \pvi.
In the pre-ChatGPT period, an additional bit of machine-usable information \textit{decreases} the number of product reviews by $\approx$22.9\% even after controlling for seller fixed-effects and the age of the product listing\footnote{We include $\log(\text{age})$ as a covariate not an offset, because review accumulation does not scale linearly with the number of days online and older, more successful listings can have exposure that compounds over time.}.
This is surprisingly large, especially considering the non-significant decline in human-usable information after the release of ChatGPT.
However, given that LLMs were not widely used by consumers in the pre-ChatGPT period, it could simply be the case that anything that increases machine-usability \textit{within the pre-period} did so in a way that was unappealing to human buyers, with the issue being rectified in the post-period.
In the post-ChatGPT period, an additional bit of machine-usable information \textit{increases} product reviews by $\approx$2.4\% after controls.
However, the latter is likely an under-estimate given that the prevalence of agentic AI also grew over this time, strengthening the downstream impact of any mecha-nudge. 

Based on the premise that mecha-nudging is probably concentrated in listings for which the agent curation decision $Z_M = 1$ (i.e., the decision=SELECT listings), we then estimate the effect separately for them (Table \ref{tab:review_pvi}, Panel B).
For selected listings in the pre-ChatGPT period, an additional bit of machine-usable information decreases the number of reviews by 25.2\%; in the post-ChatGPT period, it increases the number of
reviews by 81.9\%, though the effect diminishes to 15.5\% after controlling for seller fixed-effects and listing age.
In contrast, there is no significant change for listings with a pass decision.

It is worth noting that because of how the labels were constructed, 1 bit is the maximum possible gain in machine-usable information, only possible when the agent moves from being perfectly uncertain to perfectly certain.
This can and does happen, but is not typical.
For a product in the Pet Supplies category, where the average increase in machine-usable information was 0.3775 bits from the pre- to post-period, the corresponding rise in review count---and in sales, assuming the purchase-to-review yield remained constant---would be in the range of 5.8--30.9\%.
For a product in the Art \& Collectibles category, which saw almost no increase in machine-usable information, the average rise in review count would be no greater than 1.1\%.

\begin{table*}[!htbp] \centering
\begin{threeparttable}
\caption{PVI and Number of Product Reviews}
\label{tab:review_pvi}
\begin{tabular}{@{\extracolsep{34pt}}lccc}
\\[-1.8ex]\hline
\hline \\[-1.8ex]
 & \multicolumn{3}{c}{\textit{Dependent Variable: Number of Product Reviews}} \\
\cline{2-4}
\\[-1.8ex] & \multicolumn{1}{c}{(1)} & \multicolumn{1}{c}{(2)} & \multicolumn{1}{c}{(3)} \\
 & Pooled & Seller FE & Seller FE \\
 &  &  & $+\log(\text{age})$ \\
\hline \\[-1.8ex]
\multicolumn{4}{l}{\textit{Panel A}} \\
\\[-1.4ex]
 PVI & $-0.1475$ & $-0.2700$\rlap{$^{***}$} & $-0.2599$\rlap{$^{**}$} \\
 & $(0.0906)$ & $(0.0861)$ & $(0.1029)$ \\
 PVI $\times$ after & $+0.7759$ & $+0.3516$\rlap{$^{***}$} & $+0.2835$\rlap{$^{***}$} \\
 & $(0.6607)$ & $(0.0895)$ & $(0.1088)$ \\
 Observations & 51,033 & 15,030 & 15,030 \\
\\[1.5ex]
\multicolumn{4}{l}{\textit{Panel B (stratified by SELECT/PASS decision)}} \\
\\[-1.4ex]
 PVI $\times$ decision=SELECT & $-0.2900$\rlap{$^{***}$} & $-0.2002$\rlap{$^{**}$} & $-0.0828$ \\
 & $(0.0802)$ & $(0.1014)$ & $(0.0991)$ \\
 PVI $\times$ decision=PASS & $+0.2445$ & $-0.1331$ & $-0.1571$ \\
 & $(0.1630)$ & $(0.1747)$ & $(0.1542)$ \\
 PVI $\times$ after $\times$ decision=SELECT & $+0.8881$\rlap{$^{***}$} & $+0.5615$\rlap{$^{***}$} & $+0.2273$\rlap{$^{*}$} \\
 & $(0.2230)$ & $(0.2055)$ & $(0.1336)$ \\
 PVI $\times$ after $\times$ decision=PASS & $+0.1909$ & $+0.0770$ & $+0.0766$ \\
 & $(0.5268)$ & $(0.2711)$ & $(0.2224)$ \\
\\[-1.4ex]
 Observations & 51,033 & 12,867 & 12,867 \\
\\[-1.0ex]
\hline
\hline \\[-1.8ex]
\textit{Note:} & \multicolumn{3}{r}{$^{*}$p$<$0.1; $^{**}$p$<$0.05; $^{***}$p$<$0.01} \\
\end{tabular}
\begin{tablenotes}[flushleft]
\small
\item[a] PPML (Poisson pseudo-MLE) regression of per-listing review counts against PVI. Data is Gemma-3-27B-labeled, class-balanced listings ($\text{after}=1$ denotes post-ChatGPT). To translate coefficients into a percentage change in the number of reviews, take $(\exp(\text{coefficient}) - 1) \times 100$. Fine-tuning model: Llama-3.1-8B-Instruct. Labeling model: Gemma-3-27B-IT.
\item[b] The Seller FE column drops sellers with one product or whose products all have zero reviews, so $N$ is the effective post-drop sample; Panel B's fixed effects are seller$\times$decision, so more singletons drop and its $N$ is below Panel A's. The $\log(\text{age})$ column adds log days from listing creation to time of scrape.
\item[c] Panel A: In the pre-ChatGPT period, an additional bit of machine-usable information \textit{decreases} the number of product reviews by $(\exp(-0.2599) - 1) \times 100 \approx 22.9\%$ after controlling for seller fixed-effects and listing age. In the post-ChatGPT period, the effect flips but is modest in size: an additional bit of machine-usable information \textit{increases} the number of reviews by $(\exp(-0.2599 + 0.2835) - 1) \times 100 \approx 2.4\%$.
\item[d] Panel B re-estimates the same specification separately for listings the labeling model would SELECT versus PASS, based on the premise that the effects of PVI change are ostensibly concentrated among the former. Indeed, in the pre-ChatGPT period, an additional bit of machine-usable information \textit{decreases} the number of reviews for selected listings by 25.2\% (Column 1); in the post-ChatGPT period, it \textit{increases} the number of reviews by 81.9\%. The effect diminishes to 15.5\% (Column 3) after controlling for seller fixed-effects and listing age.
\end{tablenotes}
\end{threeparttable}
\end{table*}

\section{Mechanisms}
\label{appendix:mechanisms}

\begin{table*}[htbp]
\centering
\caption{Token Importance via Counterfactual Ablation ($N\geq25$)}
\label{tab:token_ablation_n25}
\begin{threeparttable}
\resizebox{\textwidth}{!}{%
\begin{tabular}{p{3cm}rrrrrrrcp{1.5cm}}
\toprule
Token & $N$ & Mean PVI (with) & Mean PVI (without) & $\Delta$ PVI & Pos.\ ($Z_M$) & Pos.\ (Pred.) & Pos.\ (w/o) & Pol. & Source \\
\midrule
prolific & 27 & 0.7866 & 0.0276 & 0.7590 & 96\% & 100\% & 74\% & + & O \\
splitting & 29 & 0.7887 & 0.0481 & 0.7406 & 76\% & 79\% & 62\% & $-$ & O \\
qt & 35 & 0.8209 & 0.1129 & 0.7080 & 26\% & 29\% & 29\% & + & V \\
happier & 35 & 0.7906 & 0.1088 & 0.6818 & 91\% & 86\% & 77\% & + & O,V \\
junk & 98 & 0.8007 & 0.1649 & 0.6358 & 27\% & 24\% & 33\% & $-$ & O \\
snazzy & 30 & 0.9528 & 0.3263 & 0.6265 & 10\% & 13\% & 30\% & + & O \\
dripping & 26 & 0.9498 & 0.3329 & 0.6170 & 42\% & 42\% & 35\% & $-$ & O \\
accident & 27 & 0.8245 & 0.2104 & 0.6141 & 52\% & 48\% & 48\% & $-$ & V \\
relieve & 45 & 0.9625 & 0.3900 & 0.5725 & 51\% & 51\% & 53\% & + & V \\
simplistic & 36 & 0.7163 & 0.1780 & 0.5383 & 64\% & 61\% & 67\% & $-$ & O \\
incomplete & 27 & 0.8177 & 0.2842 & 0.5336 & 26\% & 22\% & 41\% & $-$ & O \\
oddities & 72 & 0.8413 & 0.3122 & 0.5291 & 85\% & 81\% & 75\% & $-$ & O \\
scarce & 75 & 0.7284 & 0.2484 & 0.4800 & 83\% & 81\% & 65\% & $-$ & O \\
forged & 39 & 0.6663 & 0.1899 & 0.4764 & 74\% & 72\% & 62\% & $-$ & O \\
intimacy & 67 & 0.9863 & 0.5139 & 0.4724 & 97\% & 97\% & 79\% & + & O \\
unwanted & 56 & 0.8972 & 0.4284 & 0.4688 & 68\% & 66\% & 57\% & $-$ & O,V \\
jeweler & 93 & 0.8342 & 0.3675 & 0.4667 & 82\% & 86\% & 69\% & 0 & E \\
dying & 62 & 0.9652 & 0.4985 & 0.4667 & 42\% & 42\% & 48\% & $-$ & O \\
employee & 29 & 0.8865 & 0.4434 & 0.4431 & 31\% & 28\% & 41\% & 0 & E \\
snags & 47 & 0.7157 & 0.2741 & 0.4416 & 30\% & 26\% & 40\% & $-$ & O \\
\addlinespace[2pt]
\multicolumn{10}{c}{$\vdots$} \\
\addlinespace[2pt]
attracts & 25 & -0.1578 & 0.8384 & -0.9962 & 80\% & 76\% & 84\% & + & V \\
fissures & 31 & -0.1604 & 0.4953 & -0.6557 & 61\% & 94\% & 71\% & $-$ & O \\
sincere & 27 & -0.1428 & 0.4956 & -0.6384 & 37\% & 22\% & 41\% & + & O,V \\
radiance & 45 & 0.3887 & 0.9584 & -0.5698 & 71\% & 67\% & 71\% & + & O,V \\
cheery & 42 & 0.1782 & 0.7451 & -0.5669 & 43\% & 40\% & 40\% & + & O,V \\
barrier & 49 & 0.3701 & 0.9291 & -0.5590 & 39\% & 29\% & 39\% & $-$ & V \\
unfortunate & 59 & 0.1067 & 0.6621 & -0.5555 & 59\% & 53\% & 53\% & $-$ & O,V \\
inflammation & 25 & 0.1626 & 0.7129 & -0.5503 & 40\% & 28\% & 32\% & $-$ & O \\
brightest & 31 & 0.0977 & 0.6325 & -0.5348 & 65\% & 65\% & 61\% & + & O,V \\
majesty & 40 & 0.1607 & 0.6904 & -0.5297 & 80\% & 68\% & 68\% & + & O \\
administration & 34 & 0.4178 & 0.9473 & -0.5295 & 24\% & 21\% & 24\% & 0 & E \\
lobby & 48 & 0.2878 & 0.7656 & -0.4778 & 62\% & 69\% & 58\% & + & V \\
favored & 31 & 0.2767 & 0.7415 & -0.4648 & 81\% & 74\% & 71\% & + & O,V \\
spacious & 31 & 0.4596 & 0.9232 & -0.4637 & 39\% & 45\% & 35\% & + & O \\
wound & 57 & 0.4115 & 0.8686 & -0.4571 & 75\% & 74\% & 75\% & $-$ & O \\
adorns & 29 & 0.3510 & 0.8066 & -0.4556 & 66\% & 66\% & 69\% & + & V \\
mature & 35 & 0.1879 & 0.6261 & -0.4383 & 49\% & 40\% & 46\% & + & O,V \\
sham & 25 & 0.4116 & 0.8472 & -0.4355 & 32\% & 24\% & 28\% & $-$ & O \\
limits & 56 & 0.4557 & 0.8893 & -0.4336 & 73\% & 68\% & 71\% & $-$ & O \\
economy & 59 & 0.4029 & 0.8271 & -0.4242 & 58\% & 59\% & 58\% & 0 & E \\
\bottomrule
\end{tabular}
}
\begin{minipage}{\textwidth}
\footnotesize
Counterfactual ablation: for each word $t$, listings containing $t$ are identified, $t$ is removed, and the fine-tuned model is re-run on the modified text.
Positive $\Delta \pvi$ indicate that including the word makes the machine behave more predictably; negative $\Delta \pvi$ means it behaves less predictably.
$N$ is the number of listings containing the token; only tokens with $N\geq25$ are included. 
Pos.\ ($Z_M$) is the fraction of listings containing the token where the constructed label is the positive class (SELECT).
Pos.\ (Pred.) is the fraction where the classifier predicts the positive class. 
Pos.\ (w/o) is the predicted positive rate after ablating the token. 
Pol.\ reports the sentiment polarity of the token according to the source lexicon (+~positive, $-$~negative). Source abbreviations: O~=~Opinion Lexicon, V~=~VADER, E~=~Empath.
\end{minipage}
\end{threeparttable}
\end{table*}

\paragraph{Token Ablation} 
Although the specific mechanisms driving mecha-nudging are complex (\S\ref{sec:experiments}), we use a counterfactual ablation approach to identify which individual words might drive changes in PVI. 
Because we are interested in the commerce-related vocabulary that sellers use to describe their products, we restrict attention to tokens drawn from three established NLP lexicons: the Opinion Lexicon \citep{hu2004mining}, which covers positive and negative sentiment words; VADER \citep{hutto2014vader}, a sentiment-intensity lexicon; and the business, money, and shopping categories of Empath \citep{fast2016empath}.

For each word $t$ in the combined lexicon that appears in the data, we identify the listings containing $t$, remove it using a word-boundary regular expression, and re-run the fine-tuned model on the modified text. 
Then we calculate:
\begin{equation}
\Delta\text{PVI}(t) = \frac{1}{|\mathcal{L}_t|}\sum_{i \in \mathcal{L}_t} \pvi (x_i \to b_i) - \frac{1}{|\mathcal{L}_t|}\sum_{i \in \mathcal{L}_t} \pvi (x_{i, \neg t} \to b_i)
\end{equation}
where $\mathcal{L}_t$ denotes the set of listings in the sample that contain token $t$. 

Note that $\pvi (x_{i, \neg t} \to b_i)$ is not technically the \pvi of the modified text, since our intervention changes the distribution and would in theory require retraining another pair of content and null models.
Since this is impractical to do for every token, we instead decide to estimate the change in \pvi but just using the models finetuned on the original text.
A positive $\Delta\text{PVI}$ indicates that the token makes the machine more predictable with respect to $Z_M$; a negative value indicates that it makes the machine behave less as stated. 
Table~\ref{tab:token_ablation_n25} reports the tokens with the largest positive and negative $\Delta\text{PVI}$.
Because these are single-token perturbations over sparse lexicon entries, we treat the ablation results as descriptive evidence rather than a causal decomposition of the mechanism.

\paragraph{Listing Length} 
The literature has reported LLMs often prefer longer texts \citep{dubois2024lengthcontrolledalpacaeval}.   
In the Etsy data, however, length does not explain the observed post-ChatGPT increase in machine-usable information.
As seen in Figure \ref{fig:lengths}, although there is an increase in median listing length post-ChatGPT, not only is it modest in magnitude, it shows little-to-no correlation with machine-usable information \textit{within} the post-period itself.
Even though machine-usable information peaks in the most recent half-year, the listing length is far below its peak.
This, in addition to the previous regression wherein word count did not have a significant effect on \pvi (Table \ref{tab:controls}), suggests that simply lengthening the listing is not sufficient to explain the observed mecha-nudging.

\begin{figure}[t]
    \centering
\includegraphics[width=\linewidth]{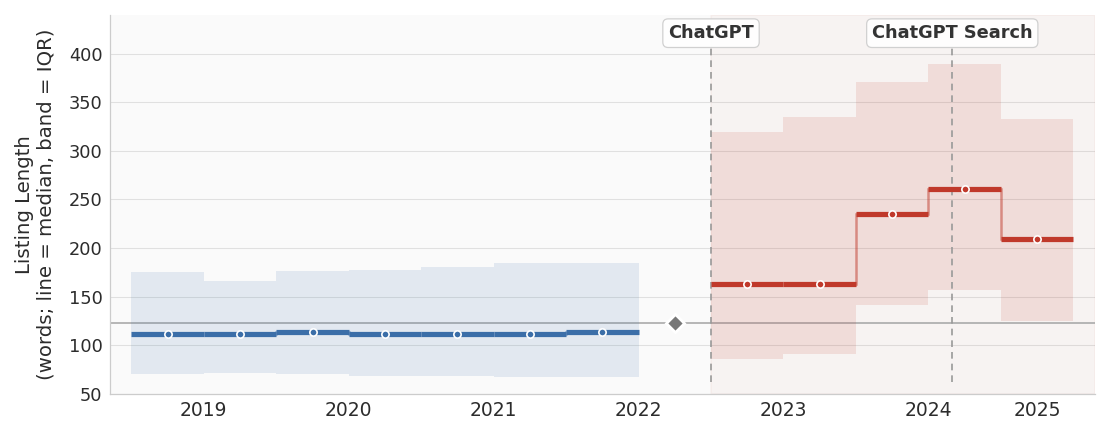}
    \caption{The median length of product listings in each half-year (inter-quartile range captured by shading). Blue denotes the pre-ChatGPT period; red the post-ChatGPT period; and the grey diamond the period just before ChatGPT (Jul-Sep 2022). Note that although the release of ChatGPT coincides with an increase in listing length, it is still modest in magnitude. Moreover, the length rises in 2024 even as the amount of machine-usable information falls (Figure \ref{fig:time_variation}), then starts declining after the release of ChatGPT Search even though the amount of machine-usable information starts climbing again. Mecha-nudging in Etsy listings is much more nuanced than simply adding more content.}
    \label{fig:lengths}
\end{figure}

\section{Human-Usable Information}
\label{appendix:human_eval}
The main experiment shows that machine-usable information
$I_{\mathcal{M}}(X \to Z_M)$ rises significantly after the release of
ChatGPT. 
For this to qualify as a \emph{realized mecha-nudge}
(Definition~\ref{def:realized_mecha_nudge}), the human-side constraint $$I_{\mathcal{H}}(\tau^{*}(X) \to A_H) \ge I_{\mathcal{H}}(X \to A_H) - \epsilon$$ must also hold to within tolerance $\epsilon$: the same textual changes that made listings more legible to an AI agent should not have made them substantially less legible to humans. 
Earlier, we provided indirect real-world evidence that is consistent with $\epsilon$ being small (Section \ref{sec:experiments}); here we report the details of the human survey we conducted using historical listings to directly estimate $I_{\mathcal{H}}(X \to A_H)$.

 A transformation may qualify as a mecha-nudge even if it also helps humans, so long as it systematically influences machine behavior without materially degrading the human environment.
Empirically, however, we are especially interested in machine-skewed changes, where machine-usable information rises more than human-usable information.

\subsection{Survey Design}
\label{ssec:survey_design}

\begin{figure}[!htbp]
\centering
\includegraphics[width=0.48\linewidth, keepaspectratio]{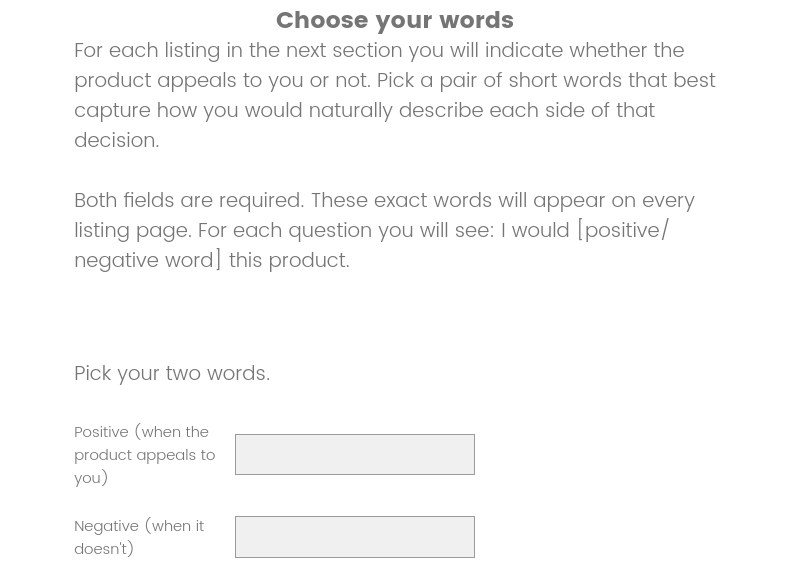}
\includegraphics[width=0.48\linewidth, keepaspectratio]{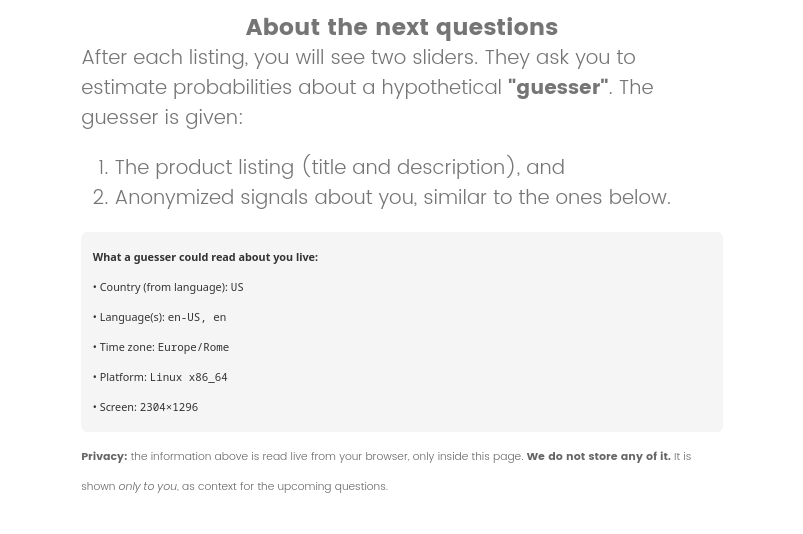}
\includegraphics[width=0.6\linewidth, keepaspectratio]{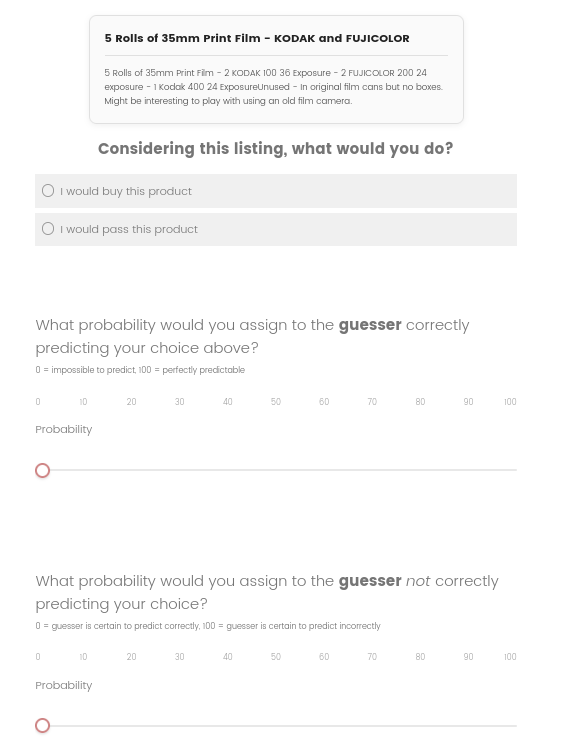}
\caption{Survey screens, clockwise in order of appearance: respondents pick their own
positive/negative action tokens, see a primer on what a human predictor would have access to, and then evaluate twenty listings (one per page). The two sliders are used to filter out respondents without an understanding of probability.}
\label{fig:survey_screens}
\end{figure}

As shown in Figure \ref{fig:survey_screens}, we use a self-elicitation design:
a respondent sees a listing with its title and description untruncated, and chooses an action
$a_h \in \{Y_{\text{pos}}, Y_{\text{neg}}\}$, where the two action labels are
short tokens the respondent supplies themselves at the start of the survey
(e.g.\ ``buy''/``ignore''). Two sliders follow, asking the respondent to estimate:
\begin{itemize}\itemsep0.1em
\item $p_Y \in [0,100]$: the probability that a hypothetical guesser, given the listing and anonymized signals about them, would correctly predict their action
\item $p_{\neg Y} \in [0,100]$: the probability that a hypothetical guesser, given the listing and anonymized signals about them, would \textit{not} correctly predict their action
\end{itemize}
Because any third-party annotator could have no better theory-of-mind for the respondent's actions than the respondent themselves, and because $\mathcal{V}$-entropy seeks to find the best possible predictor, self-elicitation is likely a better means of estimating the human-usable information.
Moreover, giving a third-party annotator sufficient context about the respondent and their preferences would raise privacy issues.
For these reasons, we use self-elicitation instead.

The first slider yields the $\mathcal{H}$-optimal conditional, $g'[x](a_h) = p_Y/100$, evaluated at the realized action. 
The pointwise contribution to the estimate of conditional $\mathcal{V}$-entropy is therefore $-\log_2 g'[x](a_h) = -\log_2(p_Y/100)$.
The second slider is a self-consistency check: since $p_Y, p_{\neg Y}$ are the probabilities of the realized and unrealized actions respectively, a respondent with consistent preferences should satisfy $p_Y + p_{\neg Y} \le 100$.

Each respondent sees 20 unique listings one at a time (ten pre-ChatGPT and ten post-ChatGPT, drawn from the same population used for the $I_{\mathcal{M}}(X \to Z_M)$ estimate; seed $=42$). 
The two action tokens chosen earlier are used to generate the per-listing label (``I would $\langle\text{token}\rangle$
this product'').
We prime the respondent with a readout of their
own browser locale, time zone, and device class---these fields are shown
but never stored, and serve only to make the ``signals available to a
guesser'' concrete; and (iii) restate the goal in plain terms.
Figure~\ref{fig:survey_screens} reproduces the relevant pages.

For respondents to estimate the unconditional term $H_\mathcal{H}(A_H)$, we would have to ask them the propensity of humans in general to act as they would across all listings in the two different time periods.
As forcing respondents to marginalize is not realistic, we assume---based on the stability of Etsy buyer statistics across both periods (\S\ref{sec:experiments})---that the $\mathcal{V}$-entropy does not change.
This means that we can estimate $I_\mathcal{H}(\tau^*(X) \to A_H) - I_\mathcal{H}(X \to A_H)$ by taking the average over $\pvi_\mathcal{H}(\tau^*(x_i) \to a_h) - \pvi_\mathcal{H}(x_i \to a_h)$, which we get from the slider.

After obtaining an `exempt' designation from our institution's IRB board, we recruited 689 respondents on Prolific and applied the per-row attention check described above.
Each respondent was paid \$4 for taking the survey and going through the 20 listings, which they were informed would take 10-15 minutes of their time.
The risks and benefits involved and the terms of confidentiality were described to the respondents.

\subsection{Results}

We applied a per-respondent filter that drops anyone failing more than $20\%$ of their per-row checks; 
This is intended to remove respondents who have a systematically poor understanding of probability and who are setting the sliders randomly to complete the task in as little time as possible.
This leaves $525$ respondents and $10{,}106$ observations. 
The realized action mix is approximately $36/64$ positive/negative both before and after ChatGPT.

For each respondent we compute their period conditional mean and report the cross-respondent average and its difference.  Because every respondent sees ten listings from each period in the same survey, the natural standard-error estimator is the within-subject paired one: the standard error of the after-minus-before mean is the cross-respondent SD of the per-respondent differences divided by $\sqrt{n_{\text{resp}}}$, with degrees of freedom $n_{\text{resp}} - 1$. 
This is what we use throughout.

\begin{table}[!htbp] \centering
\begin{threeparttable}
\caption{Human-Side Conditional Log-Score}
\label{tab:human_eval_main}
\begin{tabular}{@{\extracolsep{25pt}}lcc}
\\[-1.8ex]\hline
\hline \\[-1.8ex]
& \multicolumn{2}{c}{\textit{Dependent variable: $\log_2 \hat{g}'[X](A_H)$}} \cr \cline{2-3}
\\[-1.8ex] & \multicolumn{1}{c}{Valid respondents} & \multicolumn{1}{c}{All respondents} \\
\hline \\[-1.8ex]
Pre-ChatGPT Mean & $-$0.891 & $-$0.912 \\
Post-ChatGPT Mean & $-$0.906 & $-$0.923 \\
\hline \\[-1.8ex]
after & $-$0.015 & $-$0.013 \\
& (0.015) & (0.013) \\
\hline \\[-1.8ex]
Respondents & 525 & 680 \\
Observations & 10106 & 11811 \\
\hline
\hline \\[-1.8ex]
\textit{Note:} & \multicolumn{2}{r}{$^{*}$p$<$0.1; $^{**}$p$<$0.05; $^{***}$p$<$0.01} \\
\end{tabular}
\begin{tablenotes}[flushleft]
\small
\item[a] Each cell is the within-respondent contribution to $\log_2 \hat{g}'[X](A_H)$, the negative of the conditional $\mathcal{V}$-entropy on the human side. Higher values indicate listings that are more predictable for humans. The \textit{after} coefficient is the post $-$ pre difference in means; positive (and significant) values would indicate that the post-ChatGPT effect increase for the machine side is \emph{also} present for the human side. 
\item[b] Standard errors in parentheses are paired at the respondent level: each respondent contributes one observation $\bar{x}_{\text{after}} - \bar{x}_{\text{before}}$, and the SE is the cross-respondent SD of those differences divided by $\sqrt{n_{\text{resp}}}$. This is the natural balanced-block estimator since every respondent sees ten listings from each period.
\item[c] Sample: 689 Prolific respondents, pooling the April and August waves. \textit{All respondents} pools every finished respondent; \textit{Valid respondents} additionally drops those with more than $20\%$ rows failing the per-row attention check $p_Y + p_{\neg Y} \leq 110$.
\end{tablenotes}
\end{threeparttable}
\end{table}

\begin{table}[!htbp] \centering
\begin{threeparttable}
\caption{Human-Chosen Action Tokens}
\label{tab:human_eval_tokens}
\begin{tabular}{@{\extracolsep{80pt}}lrlr}
\\[-1.8ex]\hline
\hline \\[-1.8ex]
\multicolumn{2}{c}{\textit{Positive}} & \multicolumn{2}{c}{\textit{Negative}} \\
\cline{1-2} \cline{3-4}
\\[-1.8ex] Token & $N$ & Token & $N$ \\
\hline \\[-1.8ex]
like & 97 & dislike & 94 \\
good & 62 & bad & 77 \\
buy & 58 & skip & 47 \\
love & 57 & hate & 41 \\
purchase & 18 & pass & 26 \\
nice & 18 & no & 22 \\
appealing & 18 & ignore & 22 \\
yes & 17 & not buy & 18 \\
positive & 14 & unappealing & 14 \\
great & 12 & negative & 14 \\
enjoy & 9 & avoid & 10 \\
cool & 8 & ugly & 9 \\
want & 7 & reject & 9 \\
amazing & 6 & boring & 7 \\
awesome & 6 & not like & 6 \\
\hline
\hline \\[-1.8ex]
\end{tabular}
\begin{tablenotes}[flushleft]
\small
\item Each respondent supplies one positive and one negative action token at the start of the survey; these are piped into the per-listing multiple-choice options as ``I would \emph{token} this product''. $N$ is the number of respondents whose self-chosen token matched (case-folded). Respondents: 689; unique positive tokens: 218, unique negative tokens: 230, unique pairs: 364.
\end{tablenotes}
\end{threeparttable}
\end{table}

Table~\ref{tab:human_eval_main} reports the period-conditional means in
$\log_2 \hat{g}'[X](A_H)$.  
The post-minus-pre difference is $-0.015$ bits with paired standard error $0.015$, $t = -1.02$ on $524$
degrees of freedom and $p = 0.309$; the corresponding $95\%$ confidence
interval on the difference is $[-0.044, +0.014]$ bits.

Post-ChatGPT listings carry on average only about $0.015$ fewer bits of human-usable information about the realized action than pre-ChatGPT ones, against an estimated machine-side gain of $+0.143$ bits. More importantly, the difference is non-significant at any confidence level.
The magnitude is small enough to fit comfortably under any reasonable choice of the tolerance $\epsilon$ in Definition~\ref{def:realized_mecha_nudge}: the confidence interval is consistent with exact invariance on the human side, and even taken at face value the point estimate implies a loss that is bounded and substantially smaller than the machine-side gain it accompanies.
The pre- and post-ChatGPT distributions of the pointwise conditional $\mathcal{V}$-entropy are nearly identical, with a small shift of mass out of the upper-middle bins (roughly $0.5$ to $0.9$) into both the near-zero bin ($-0.5$ to $0$) and the low tail below $-2$.

Table~\ref{tab:human_eval_robustness} re-estimates the post-minus-pre difference under alternative settings of the per-respondent attention filter, dropping respondents whose share of failed per-row checks exceeds $20\%$, $50\%$, or $80\%$. Additionally, we check for no respondent-level exclusion at all. Results do not vary: relaxing the threshold grows the sample from $525$ to $680$ respondents, but the estimate stays between $-0.015$ and $-0.011$ bits and the paired standard error moves only from $0.015$ to $0.013$. No column reaches a conventional significance level, with $p$ ranging from $0.309$ to $0.423$, so the null does not depend on where the exclusion threshold is set.

Finally, Table~\ref{tab:human_eval_tokens} lists the most-chosen action tokens supplied by respondents for the positive and negative options. 
The $689$ respondents choose from a long tail of self-supplied verbs and adjectives ($218$ unique positive tokens, $230$ unique negative tokens, $364$ unique pairs after case-folding), with ``like'', ``good'', ``buy'' and ``love'' on the positive side and ``dislike'', ``bad'', ``skip'' and ``hate'' on the negative side accounting for most of the tokens chosen.
Allowing respondents to supply their own tokens reduces the framing that a survey environment imposes.
\begin{table}[!htbp] \centering
\begin{threeparttable}
\caption{Robustness to the Respondent-Level Attention Filter}
\label{tab:human_eval_robustness}
\begin{tabular}{@{\extracolsep{10pt}}lcccc}
\\[-1.8ex]\hline
\hline \\[-1.8ex]
& \multicolumn{4}{c}{\textit{Dependent variable: $\log_2 \hat{g}'[X](A_H)$}} \cr \cline{2-5}
\\[-1.8ex] & \multicolumn{4}{c}{\textit{Respondent dropped if bad-row share exceeds}} \\
\\[-1.8ex] & \multicolumn{1}{c}{$>20\%$} & \multicolumn{1}{c}{$>50\%$} & \multicolumn{1}{c}{$>80\%$} & \multicolumn{1}{c}{none} \\
\hline \\[-1.8ex]
Pre-ChatGPT Mean & $-$0.891 & $-$0.912 & $-$0.912 & $-$0.912 \\
Post-ChatGPT Mean & $-$0.906 & $-$0.923 & $-$0.923 & $-$0.923 \\
\hline \\[-1.8ex]
after & $-$0.015 & $-$0.011 & $-$0.013 & $-$0.013 \\
& (0.015) & (0.014) & (0.013) & (0.013) \\
\hline \\[-1.8ex]
Respondents & 525 & 633 & 677 & 680 \\
Observations & 10106 & 11490 & 11802 & 11811 \\
\hline
\hline \\[-1.8ex]
\textit{Note:} & \multicolumn{4}{r}{$^{*}$p$<$0.1; $^{**}$p$<$0.05; $^{***}$p$<$0.01} \\
\end{tabular}
\begin{tablenotes}[flushleft]
\small
\item[a] Columns vary the per-respondent quality filter only. The per-row attention check $p_Y + p_{\neg Y} \leq 110$ is applied throughout; a column headed $>20\%$ additionally drops any respondent failing that check on more than $20\%$ of their twenty rows, and the final column removes no individuals. 
\item[b] The \textit{after} coefficient is the post--pre difference in respondent-level means, with paired standard errors in parentheses: each respondent contributes one observation $\bar{x}_{\text{after}} - \bar{x}_{\text{before}}$, and the SE is the cross-respondent SD of those differences divided by $\sqrt{n_{\text{resp}}}$.
\item[c] All columns pool the April and August waves, which run the identical survey specification. The $>20\%$ column is the \textit{Valid respondents} slice of Table~\ref{tab:human_eval_main} and the \textit{none} column its \textit{All respondents} slice.
\end{tablenotes}
\end{threeparttable}
\end{table}

\section{AI Text Detection}
\label{appendix:ai_detection}

The rephrasing control in Appendix \ref{appendix:controls} asks what happens when an LLM rewrites a listing.
Here we ask how much of the Etsy text is LLM-written in the first place.
We score listings with \texttt{pangram-4}, the most recent classifier from Pangram, a well-known commercial AI detector that has also been independently validated \citep{jabarian2025artificial}.
Listings are sampled within each (period, \pvi bin) cell, using the same bins as the \pvi histograms, so that the whole \pvi distribution is represented. We analyze $2{,}295$ pre-ChatGPT and $2{,}256$ post-ChatGPT listings.

In the pre-period, 0\% of listings are flagged as being mostly AI-written.
Given that LLMs were not widely available to the public at the time, this is expected and provides some reassurance that the detector is not finding false positives.
In the post-period, less than 4\% of listings are flagged as being mostly AI-written, and the rate is nearly flat across the \pvi distribution (Figure \ref{fig:pangram_detection}).
Whatever separates high-\pvi listings from low-\pvi ones, it is not that a machine wrote them.

\begin{figure}[!htbp]
	\centering
    \includegraphics[width=0.8\textwidth]{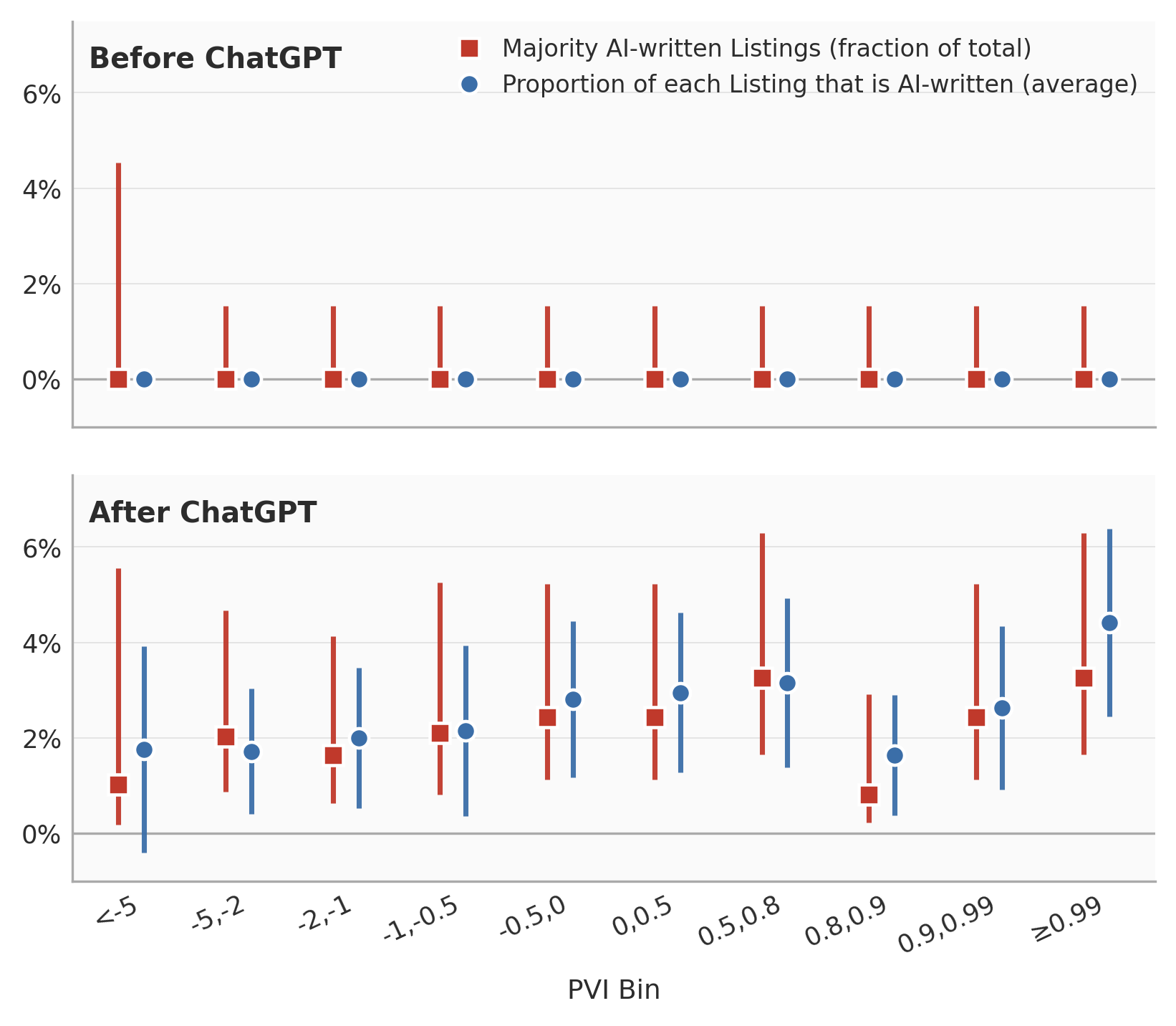}
	\caption{AI-text detection for each \pvi bin, scored with \texttt{pangram-4}. Each panel reports two measures per bin: the share of listings with at least half of the text flagged as AI-written, and the average proportion of each listing that is AI-written. Vertical bars are 95\% confidence intervals, Wilson for the two shares and normal for the mean. Bins match those used for the \pvi histograms in Figure \ref{fig:pvi_histogram}. Not only is the prevalence of AI-written content low overall, but it also does not vary much with \pvi; just increasing the amount of machine-written content does not necessarily increase the amount of machine-usable information.}
	\label{fig:pangram_detection}
\end{figure}

\section{Related Work}
\label{appendix:related}

We begin with the empirical and conceptual literature on nudges, then discuss three frameworks (Bayesian persuasion, rational inattention, and salience) that have been used to model how information design shapes decisions, and explain why they fall short in the AI agent setting. 
We then introduce $\mathcal{V}$-information, the observer-relative measure of usable information on which our framework builds.

Nudges are a central theme in behavioral economics. Following \citet{thalerNudgeImprovingDecisions2008}, we use the term \emph{nudge} to mean a feature of choice architecture that predictably shifts behavior without removing options or materially changing economic incentives, and that remains easy to avoid. 
A large literature documents that such small design changes can have large effects in practice.
For example, automatic enrollment in retirement plans sharply increases participation ,and the default options anchor the actual contribution amounts \citep{madrianPowerSuggestionInertia2001}.
Nudges are most often discussed informally, as context-specific interventions. 
However, they can be
mathematically formalized under some frameworks, which we discuss below: Bayesian persuasion (nudges as signal design), rational inattention (nudges as changes in information acquisition costs), and salience (nudges as context-dependent attention and weighting).

\paragraph{Bayesian Persuasion}

A natural benchmark is the framework of Bayesian persuasion, which asks how a sender should design a signaling policy to influence receivers’ beliefs and actions about some underlying quality of an item. An item can be a product, but also, in a famous example of this theory, a defendant’s guilt, with the sender being a prosecutor and the receiver the judge. The signaling structure might be public, as in the seminal paper \citet{kamenicaBayesianPersuasion2011a}, or private, as in \citet{arieliPrivateBayesianPersuasion2019a}.

In the seminal treatment, a sender chooses an information structure (signals) to influence a receiver’s action $a$ given state $s$ and utilities; the receiver best-responds to the posteriors induced by the signal. \citet{kamenicaBayesianPersuasion2011a} characterizes optimal signal structures under known priors, action sets, and payoffs. If, on the other hand, private messages are available, then the optimal policy is well-defined in the case of no receiver payoff externalities and the sender’s additive utility over receiver responses. With conditionally independent signals, a policy that is separately optimal for each receiver is also globally optimal \citep{arieliPrivateBayesianPersuasion2019a}.

However, mecha-nudges are not solely private or public signals. One way to model this type of interaction is through the literature on Bayesian persuasion with leakages \cite{haghtalabLeakageRobustBayesianPersuasion2024}. More broadly, the framework requires a Bayesian update conditioned on the signal, which is intractable if the receiver is an LLM operating over free-form text, as in our setup.

\paragraph{Rational Inattention} Rational inattention offers another lens for analyzing why humans may not verify the presence of hidden messages. \citet{simsImplicationsRationalInattention2003} models settings in which learning about signals is costly, a framework that can be represented as a generalized multinomial logit \citep{matejkaRationalInattentionDiscrete2015}. Because investigating a nudge is costly, rational inattention predicts that individuals will not always have an incentive to verify whether hidden signals are present. This literature is closely related to the Bayesian persuasion literature discussed above: Bayesian persuasion can also incorporate costs of persuasion \citep{gentzkowCostlyPersuasion2014}, and in both theories, signals are costly (see \S4 in \citet{kamenicaBayesianPersuasion2011a}). Rational inattention provides a useful account of why mecha-nudges can persist---human verification is expensive---but, like Bayesian persuasion, it requires specifying information costs and decision structures that are difficult to specify for LLM-based agents.

\paragraph{Salience}
Another perspective from theory frames nudges as \emph{context-dependent attention}, in which select features capture the attention of decision-makers. 
In the salience framework by \citet{bordaloSalienceConsumerChoice2013}, consumers place excessive emphasis on particular aspects of products (e.g., price or quality). 
Since salience is specified in relative terms, the same item may receive varied evaluations depending on which menu of options it appears in, creating effects similar to reframing or introducing decoy options even when possible actions and rewards remain fixed. 
This approach is distinct from the behavior seen in frameworks where companies deliberately hide data and certain individuals initially fail to notice relevant details, for instance \citet{gabaixShroudedAttributesConsumer2006}.

In the context of model concealment, distortions occur when some attributes are hidden or not readily observable; in the salience framework, distortions result from allocating attention \emph{within} some observed menu, depending on what draws attention. 
This explains how choice architecture influences outcomes: they make some information more accessible to the decision-maker.
Still, the framework presupposes that the decision-maker is a human selecting from a set of well-specified attributes, which may not hold when the entity making the decision is an AI agent processing free-form text.

\end{document}